\newtheorem{thm}{Theorem}
\newtheorem{lem}{Lemma}
\newtheorem{rem}{Remark}
\DeclareMathOperator*{\E}{\mathbb{E}}
\DeclareMathOperator*{\e}{\mathrm{err}}
\DeclareMathOperator*{\m}{\mathrm{mem}}
\DeclareMathOperator*{\Cov}{\mathrm{Cov}}
\DeclareMathOperator*{\tr}{\mathrm{tr}}
\DeclareMathOperator*{\Var}{\mathrm{Var}}
\newcommand*\circled[1]{\tikz[baseline=(char.base)]{
            \node[shape=circle,draw,inner sep=2pt] (char) {#1};}}
\begin{document}

\begin{frontmatter}

\title{Long-Tail Theory under  Gaussian Mixtures}

\author[A]{\fnms{Arman}~\snm{Bolatov}}
\author[B]{\fnms{Maxat}~\snm{Tezekbayev}}
\author[C]{\fnms{Igor}~\snm{Melnykov}}
\author[B]{\fnms{Artur}~\snm{Pak}}
\author[D]{\fnms{Vassilina}~\snm{Nikoulina}}
\author[E]{\fnms{Zhenisbek}~\snm{Assylbekov}\orcid{0000-0003-0095-9409}\thanks{Corresponding Author. Email: zhenisbek.assylbekov@gmail.com. Work done while at Nazarbayev University.}}

\address[A]{Department of Computer Science, School of Engineering and Digital Sciences, Nazarbayev University,\\Kabanbay Batyr 53, Astana, Kazakhstan, 010000}
\address[B]{Department of Mathematics, School of Sciences and Humanities, Nazarbayev University,\\Kabanbay Batyr 53, Astana, Kazakhstan, 010000}
\address[C]{Department of Mathematics and Statistics, University of Minnesota Duluth, Duluth, MN, USA}
\address[D]{NAVER LABS Europe, 6-8 chemin de Maupertuis, 38240 Meylan, France}
\address[E]{Department of Mathematical Sciences, Purdue University Fort Wayne, Fort Wayne, IN, USA}

\begin{abstract}
We suggest a simple Gaussian mixture model for data generation that complies with Feldman's long tail theory (2020). We demonstrate that a linear classifier cannot decrease the generalization error below a certain level in the proposed model, whereas a nonlinear classifier with a memorization capacity can. This confirms that for long-tailed distributions, rare training examples must be considered for optimal generalization to new data. Finally, we show that the performance gap between linear and nonlinear models can be lessened as the tail becomes shorter in the subpopulation frequency distribution, as confirmed by experiments on synthetic and real data.
\end{abstract}

\end{frontmatter}

\section{Introduction}
In  classical learning theory \cite{valiant1984theory,vapnik1974theory}, generalizing ability and model complexity are usually opposed to each other: the more complex the model,\footnote{By \emph{complexity} of a model we mean its ability to fit an arbitrary dataset.} 
 the worse its generalizing ability on new data. This is well illustrated by typical curves of test and training errors as functions of the complexity of the model being trained. 
The training error tends to decrease whenever we increase the model complexity, that is, when we try harder to fit the data. With too much fitting, the model adapts itself too closely to the training data, and will not generalize well (i.e., have large test error). 

However, modern machine learning models such as deep neural networks (DNNs) break this principle: they are usually complex enough to be able to memorize the entire training set, and nevertheless show excellent generalization ability. This phenomenon, called \textbf{benign overfitting}, was discovered empirically by Zhang~et~al.~\cite{DBLP:conf/iclr/ZhangBHRV17} and has since attracted the attention of many minds in the field of machine learning, both experimentalists and theorists. We refer the reader to the survey of Bartlett~et~al.~\cite{bartlett2021deep} and Belkin~\cite{belkin2021fit} for a more comprehensive overview of benign overfitting. 

In our opinion, the most adequate explanation for the \emph{necessity} of overfitting is the \textbf{long tail theory} of Feldman~\cite{DBLP:conf/stoc/Feldman20}, which considers learning from natural data (such as texts or images). The fact is that the distribution of such data usually consists of subpopulations, and the frequencies of subpopulations have a so-called long tail, i.e. examples from rare/atypical subpopulations will regularly occur  in both training and test samples.

As an example, consider a typical dataset consisting of movie reviews, such as SST-2 \cite{socher-etal-2013-recursive}. In this dataset, each review (more precisely, each sentence) is labeled as positive or negative. If we look at a typical positive sentence,
\begin{center}
\textit{The large-format film is \underline{well suited} to capture these musicians in \underline{full regalia} and the \underline{incredible IMAX sound system} lets you feel the beat down to your toes.}
\end{center}
we will notice that it contains positive phrases (underlined). At the same time, a typical negative sentence, for example
\begin{center}
\textit{The images \underline{lack contrast}, are \underline{murky}, and are frequently \underline{too dark} to be decipherable.}
\end{center}
includes mostly negative phrases. However, the richness of human language allows one to write negative review sentences, which nevertheless abound in positive phrases:
\begin{center}
\textit{Starts out with \underline{tremendous} promise, introducing an \underline{intriguing} and \underline{alluring} premise, only to become a \underline{monumental achievement} in practically every facet of inept filmmaking.}
\end{center}
These kinds of negative reviews are not typical, and according to Feldman's long-tail theory, they constitute a separate subpopulation (or several subpopulations) in the class of negative reviews.

A similar situation is observed in the image domain. Consider the popular MNIST dataset \cite{DBLP:journals/spm/Deng12}, which consists of handwritten digits from 0 to 9. Typically, this dataset is used for 10-class classification, where each digit is a class. If we take one of the classes, say 3, then most of the examples in this dataset look like in Figure~\ref{fig:mnist_typ}.
\begin{figure}[htbp]
    \begin{minipage}{.23\textwidth}
    \includegraphics[width=\textwidth]{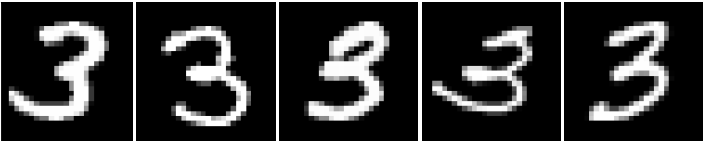}
    \caption{Typical examples of writing 3 in MNIST dataset. Source: \cite{DBLP:conf/nips/FeldmanZ20}.}
    \label{fig:mnist_typ}
    \end{minipage}\hfill\begin{minipage}{.23\textwidth}
        \includegraphics[width=\textwidth]{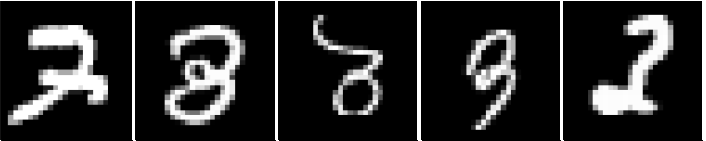}
    \caption{Atypical examples of writing 3 in MNIST dataset. Source: \cite{DBLP:conf/nips/FeldmanZ20}.}
    \label{fig:mnist_atyp}
    \end{minipage}
\end{figure}
However, there are rare and atypical examples of writing digit 3, such as in Figure~\ref{fig:mnist_atyp}, which are easily confused with other digits (for example, 7). Again, according to the long-tail theory, such rare examples should be allocated to a separate subpopulation (or several subopulations) within the class 3. 

Feldman \cite{DBLP:conf/stoc/Feldman20} showed that if the distribution over subpopulations has a long-tail (as in the examples above), then to achieve optimal performance, the learning algorithm \emph{needs} to memorize rare/atypical examples from the training set. This is formalized via a lower
bound on the generalization error, which is proportional to the number of mislabeled training examples (and the proportionality coefficient depends on the long-tailed distribution over subpopulations). Thus, in order for the learning algorithm to be able to reduce the generalization error to a minimum, it \emph{needs} to fit \emph{all} examples from the training set, including rare/atypical ones. And this, in turn, entails the need to use more complex models (with a larger number of parameters), since simple and underparameterized models are not able to memorize such atypical cases. However, Feldman's work does not provide conditions that guarantee successful learning from natural data, i.e. there are no upper bounds on the generalization error.

At the same time, it should be noted that recently we have seen an increase in the number of works in which guarantees of successful learning for \textbf{interpolating methods} are mathematically proved. For example, Chatterji~and~Long~\cite{DBLP:journals/jmlr/ChatterjiL21} showed that an overparameterized max-margin linear classifier trained on a linearly separable-with-noise data can perfectly fit the training sample (interpolate), yet generalize to new data nearly optimally.
A similar result was shown by Shamir~\cite{DBLP:conf/colt/Shamir22}, and extensions to neural networks with one hidden dense layer and one hidden convolutional layer  were recently given by Frei~et~al.~\cite{DBLP:conf/colt/FreiCB22} and Cao~et~al.~\cite{cao2022benign} respectively. We are mainly concerned with the assumptions on data generation made in these works: the setup of a \emph{single}, albeit noisy,  subpopulation within each class is completely different from what Feldman~\cite{DBLP:conf/stoc/Feldman20} suggested in his long-tail theory. Moreover, in such a setup, there are non-interpolating algorithms 
with the same (or better) generalization guarantees. Accordingly, memorizing rare noisy  examples is \emph{not} necessary to achieve optimal generalization error.

In this paper, we propose a simple Gaussian mixture model for data generation  that is consistent with Feldman's long-tail theory. Further, we show that, within the framework of the proposed model, a linear classifier cannot reduce the generalization error below a certain limit, regardless of the number of parameters used. At the same time, there is a nonlinear model with a larger number of parameters, which can reduce the generalization error below this limit. Thus we show that fitting rare/atypical training examples is \emph{necessary} for optimal generalization to new data. Finally, we prove that the performance gap between linear and non-linear models can be decreased as the tail shortens in the subpopulation frequency distribution. This result is confirmed by experiments on both synthetic and real data.

\section{Data Generating Model}
\paragraph{Motivating Example.} To motivate our choice of data-generating model, let us go back to the movie review examples. For simplicity, let us imagine that we can identify positive and negative phrases in the reviews. Next, 
let us represent each review sentence as a single number 
$$
x = (\#\text{positive phrases})-(\#\text{negative phrases})
$$
It is intuitively clear that for most positive sentences, $x > 0$, and for most negative sentences, $x < 0$. However, as we mentioned in the Introduction, there are rare examples of negative review sentences that abound in positive phrases, i.e. for which $x > 0$.\footnote{And vice versa: there are rare examples of positive review sentences that abound in negative phrases. However, for ease of analysis, we will omit this case.} This observation leads us to the following data distribution model: for all positive reviews, $x$ is concentrated at the point $\mu_+ >0$; for most negative reviews, $x$ is concentrated at the point $\mu_{-}^{\text{maj}}<0$; while there is a minority of negative reviews for which $x$ is concentrated at the point $\mu_{-}^\text{min}>0$ (Figure~\ref{fig:gmm_1d}).
\begin{figure}[htbp]
    \begin{minipage}{.23\textwidth}
    \includegraphics[width=\textwidth]{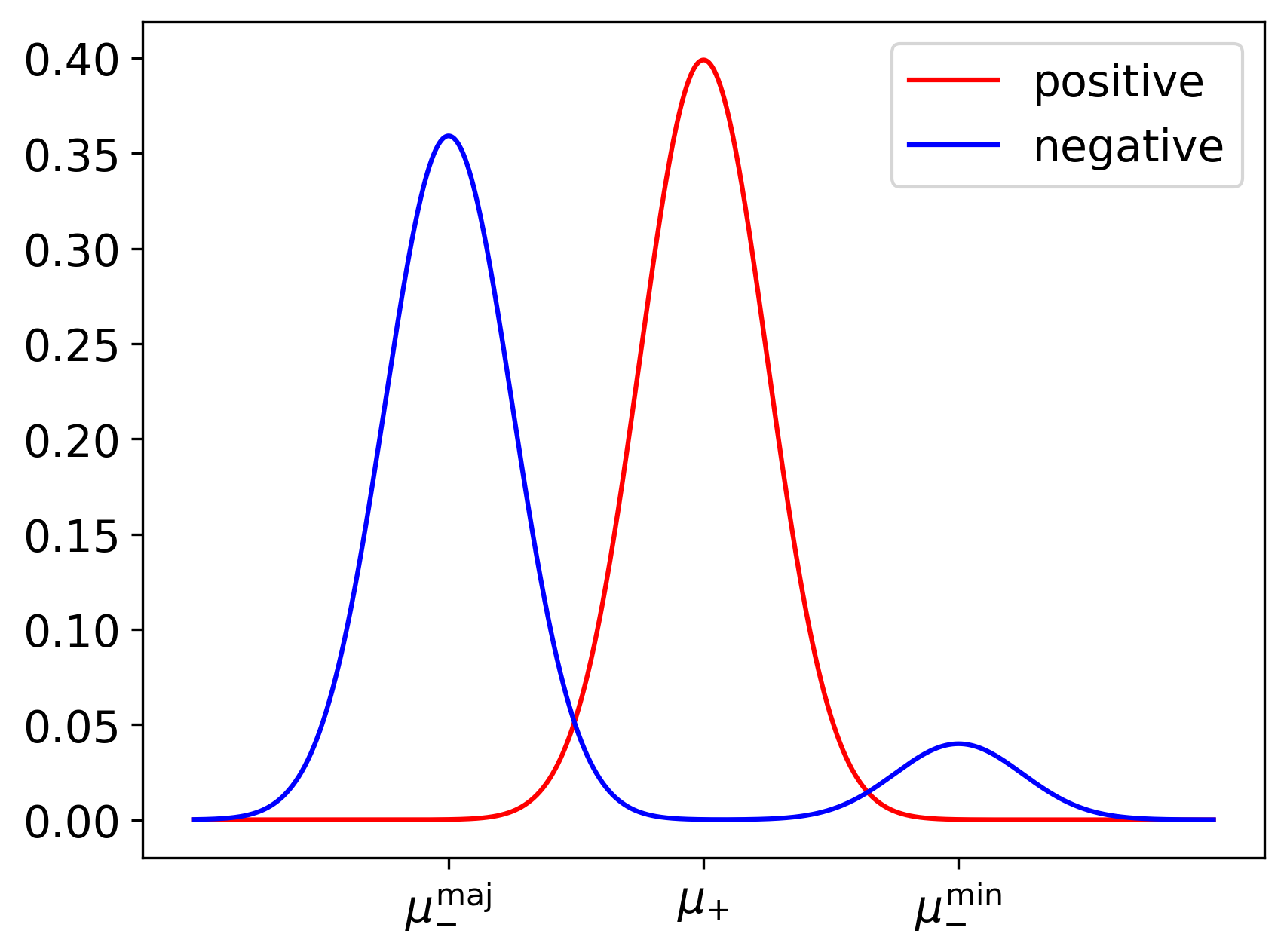}
    \caption{Simplified data distribution model.}
    \label{fig:gmm_1d}
    \end{minipage}\hfill\begin{minipage}{.23\textwidth}
    \includegraphics[width=\textwidth]{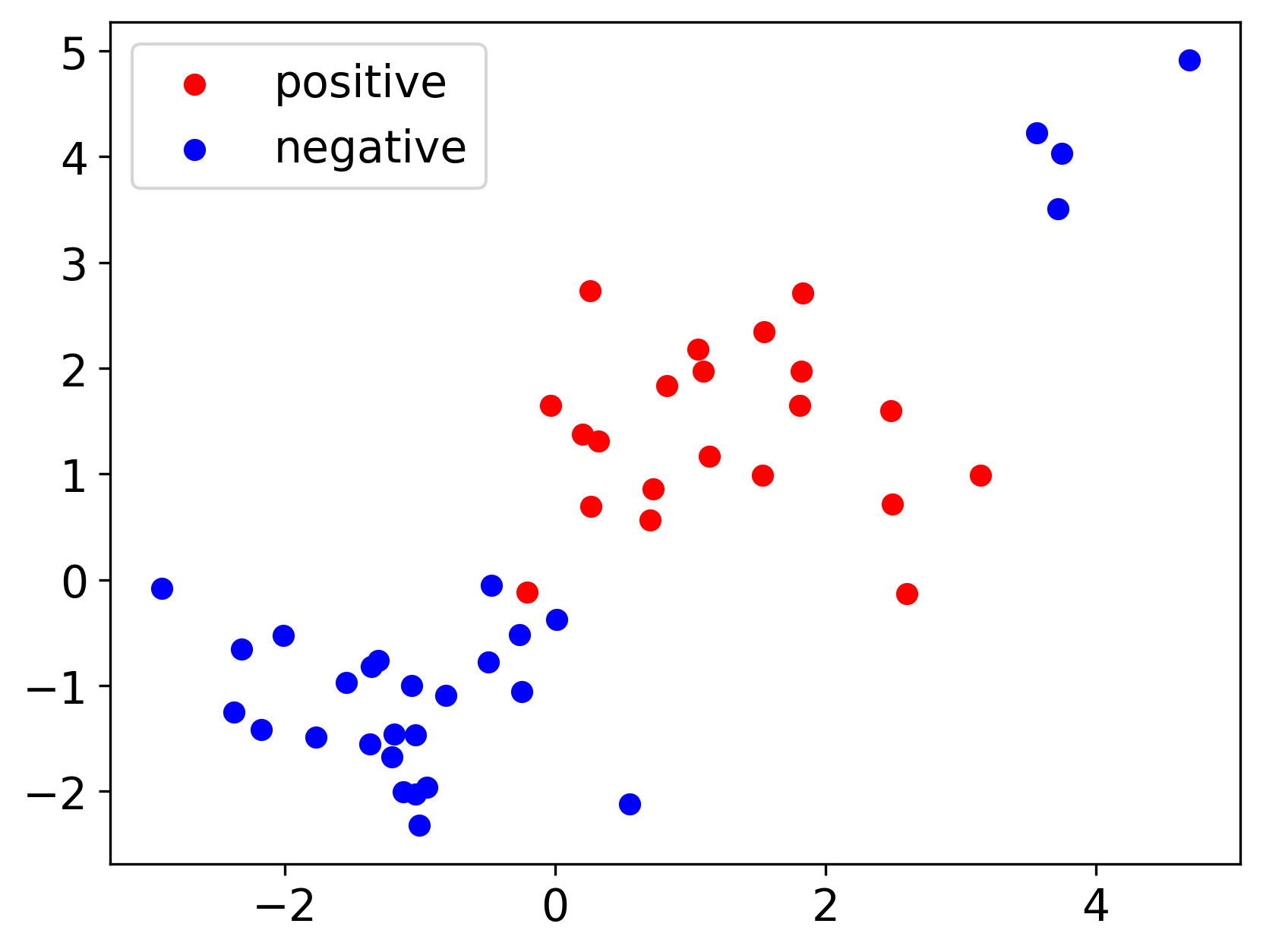}
    \caption{A sample from our data generating model.}
    \label{fig:gmm_example}
    \end{minipage}
\end{figure}
In what follows, we formalize this model.

\paragraph{Notation.} We let $\mathbb{R}$ denote the  real numbers. Bold-faced lowercase letters ($\mathbf{x}$) denote vectors in $\mathbb{R}^d$, bold-faced uppercase letters ($\mathbf{A}$, $\mathbf{X}$) denote matrices and random vectors, regular lowercase letters ($x$) denote scalars, regular uppercase letters ($X$) denote random variables. $\|\cdot\|$ denotes the Euclidean norm: $\|\mathbf{x}\|:=\sqrt{\mathbf{x}^\top\mathbf{x}}$. $\mathcal{N}(\boldsymbol\mu,\boldsymbol\Sigma)$ denotes multivariate Gaussian with mean vector $\boldsymbol\mu\in\mathbb{R}^d$ and covariance matrix $\boldsymbol\Sigma\in\mathbb{R}^{d\times d}$. `p.d.f.' stands for `probability density function', and `c.d.f.' stands for `cumulative distribution function'. 
The p.d.f. of $\mathbf{X}\sim\mathcal{N}(\boldsymbol\mu,\boldsymbol\Sigma)$ is denoted by $f(\mathbf{x};\boldsymbol\mu,\boldsymbol\Sigma)$. The p.d.f. and c.d.f. of $Z\sim\mathcal{N}(0,1)$ are denoted by $\phi(z)$ and $\Phi(z)$ respectively. We also use the standard big O notation, such as $O(\cdot)$, $\widetilde{O}(\cdot)$, $\Omega(\cdot)$, $\Theta(\cdot)$,   \cite[Chapter 3]{DBLP:books/daglib/0023376}. 

\paragraph{The Model.} Let $\mathbf{X}\in\mathbb{R}^d$ be the feature vector, and $Y\in\{-1,+1\}$ its class label. We assume that $Y$ is a Rademacher random variable, i.e.
\begin{equation}
\Pr[Y=-1]=\Pr[Y=+1]=\frac12.\label{eq:prior}
\end{equation}
For the positive class, we assume that the class-conditional distribution of $\mathbf{X}$ is a spherical (a.k.a. isotropic) Gaussian centered at $\boldsymbol\mu\in\mathbb{R}^d$, i.e.
\begin{equation}
(\mathbf{X}\mid Y=+1)\sim\mathcal{N}(\boldsymbol\mu,\sigma^2\mathbf{I}).\label{eq:pos_class}
\end{equation}
Whereas for the negative class, the class-conditional distribution of $\mathbf{X}$ is a \emph{mixture} of two spherical Gaussians centered at $-\boldsymbol\mu$ and $3\boldsymbol\mu$:
\begin{align}
&(\mathbf{X}\mid Y=-1,K=1)\sim\mathcal{N}(-\boldsymbol\mu,\sigma^2\mathbf{I}),\label{eq:neg_class1}\\
&(\mathbf{X}\mid Y=-1,K=2)\sim\mathcal{N}(3\boldsymbol\mu,\sigma^2\mathbf{I}).\label{eq:neg_class2}
\end{align}
Here, the latent random variable $K$ represents the mixture component. With $K=1$, features are generated from the distribution of typical negative examples, whose proportion is $p > 1/2$ of all negative examples (i.e., this is the cluster of a majority of negative examples). With $K=2$, features are generated from the distribution of atypical/rare negative examples, whose proportion is $(1-p)<1/2$ of all negative examples:
\begin{align}
\Pr[K=1\mid Y=-1]&=p,\qquad p>\frac12\label{eq:neg_c1}\\
\Pr[K=2\mid Y=-1]&=1-p.\label{eq:neg_c2}
\end{align}

We center the atypical examples at $3\boldsymbol\mu$ so that the distance between the means of neighboring Gaussians is $2\|\boldsymbol\mu\|$, and this simplifies the analysis. The assumption that the Gaussians are
isotropic with equal covariances is also made to simplify the analysis. The centers of the Gaussians are located on the same straight line to prevent the linear separability of finite samples generated from our model. We emphasize that our goal is to build a simple data generating model that is consistent with Feldman's long-tail theory. Building a model that better agrees with real data is beyond the scope of this work and is a reasonable direction for further research. However, we believe that our model captures important features of the distribution of real data, such as the presence of rare subpopulations. We also emphasize that our model makes sense when $p>1/2$, but not too close to 1 for rare examples to occur in a finite sample. 

The distribution over $\mathbb{R}^d\times\{-1,+1\}$ given by \eqref{eq:prior}--\eqref{eq:neg_c2} will be denoted by $\mathcal{D}$. Figure~\ref{fig:gmm_example} shows a sample of size 50 from our data model with $d =2$ and $p=0.9$.

\section{Classifiers}\label{sec:classifiers}

In this section, we consider two classifiers---Linear discriminant analysis  and Mixture discriminant analysis---and examine their performance on data generated from our model. Let $\mathcal{P}$ be a distribution over $\mathbb{R}^d\times\{-1,+1\}$. For a classifier $h:\,\mathbb{R}^d\to\{-1,+1\}$, we define its generalization error (or misclassification error rate, or simply error) with respect to $\mathcal{P}$ as
\begin{equation}
    \e_\mathcal{P}[h]:=\Pr_{\mathbf{X},Y\sim\mathcal{P}}[h(\mathbf{X})\ne Y].
\end{equation}
When $h$ is parameterized (as is the case for the classifiers that we consider), and the parameters are estimated based on a sample $S:=\{(\mathbf{X}_i,Y_i)\}_{i=1}^n$ of i.i.d. observations from $\mathcal{P}$, we denote the resulting classifier as $h_S$ and consider its expected error $\E_{S\sim\mathcal{P}^n}[\e_\mathcal{P}[h_S]]$, where expectation is over samples $S$ of size $n$ from $\mathcal{P}$.

\subsection{Linear Discriminant Analysis (LDA)} 
\textbf{LDA} \cite{https://doi.org/10.1111/j.1469-1809.1936.tb02137.x}  is a generative classifier whose simplest version makes almost the same assumptions about data distribution as our data generating model $\mathcal{D}$. The only difference is that for the negative class, not a mixture, but one Gaussian is used, i.e. instead of the assumptions \eqref{eq:neg_class1}--\eqref{eq:neg_c2}, one has
\begin{equation}
    (\mathbf{X}\mid Y=-1)\sim\mathcal{N}(\boldsymbol\mu_{-},\sigma^2\mathbf{I}).\label{eq:neg_lda}
\end{equation}
The LDA classifier that has access to the true $\boldsymbol\mu$, $\sigma$, and $p$ can be written as
\begin{equation}
h^{\text{LDA}}(\mathbf{x})=\begin{cases}
+1\quad&\text{if }f(\mathbf{x};\boldsymbol{\mu},\sigma^2_{\text{LDA}}\mathbf{I})\ge f(\mathbf{x};\boldsymbol{\mu}_{-},\sigma^2_{\text{LDA}}\mathbf{I})\\
-1&\text{otherwise}
\end{cases},\label{eq:lda_clf}
\end{equation}
where $\boldsymbol\mu_{-}$ and $\sigma^2_\text{LDA}$ are functions of $\boldsymbol\mu$, $\sigma^2$, and $p$, that can be derived under the assumptions of the data generating model $\mathcal{D}$ (see Appendix~\ref{sec:lda_err_proof}). 
It is well known \cite[Section 4.3]{DBLP:books/lib/HastieTF09} that in this case the decision boundary of the LDA consists of a set of points equidistant from $\boldsymbol\mu$ and $\boldsymbol{\mu}_{-}$, i.e. it is a hyperplane, which is the perpendicular bisector of the line segment connecting $\boldsymbol{\mu}$ and $\boldsymbol\mu_{-}$. 
It is easy to see that the data distribution used in LDA is a special case of our model when $p=1$, i.e. when the proportion of atypical examples $(1-p)$ is zero and the classes are linearly separable.\footnote{with high probability over a choice of a finite sample given sufficiently large $\|\boldsymbol\mu\|$} At the same time, in our data model, for $1-p=\Omega(1/n)$, classes cannot be linearly separated, which means that LDA will fundamentally lack the ability to fit examples from the minority subpopulation of the negative class. This is formalized in the following lemma.
    
\begin{lem} \label{lem:lda_err} Let $S\sim\mathcal{D}^n$ be a random sample from our data generating model $\mathcal{D}$ with unknown $\boldsymbol\mu$ and known $\sigma^2$ and $p$. Let $h^\text{LDA}_S$ be the LDA classifier trained on $S$ with the method of moments under the assumptions \eqref{eq:prior}, \eqref{eq:pos_class}, and \eqref{eq:neg_lda}. Then 
\begin{multline}
    \E_{S\sim\mathcal{D}^n}\left[\e_\mathcal{D}[h^\text{LDA}_S]\right]
    =\frac12\left[\Phi\left(-(2p-1)\frac{\|\boldsymbol\mu\|}{\sigma}\right)\right.\\
    \left.+p\Phi\left(-(3-2p)\frac{\|\boldsymbol\mu\|}{\sigma}\right)+(1-p)\Phi\left((2p+1)\frac{\|\boldsymbol\mu\|}{\sigma}\right)\right]\\+\widetilde{O}\left(\sqrt{\frac{d}{n}}\right).\label{eq:lda_err_lb}
\end{multline}
\end{lem}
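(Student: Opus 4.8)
The plan is to express the population error of the trained classifier as a sum of three one-dimensional Gaussian tail probabilities, evaluate them exactly at the population (oracle) parameters, and then bound the fluctuation coming from estimation. First I would make the rule explicit. Under \eqref{eq:prior}, \eqref{eq:pos_class}, \eqref{eq:neg_lda} the two densities compared in \eqref{eq:lda_clf} share the covariance $\sigma^2_{\text{LDA}}\mathbf{I}$ and the priors are equal, so $\sigma^2_{\text{LDA}}$ cancels and the rule is simply ``predict $+1$ iff $\mathbf{x}$ is closer to $\hat{\boldsymbol\mu}$ than to $\hat{\boldsymbol\mu}_-$'', where $\hat{\boldsymbol\mu}=\bar{\mathbf{X}}_+$ and $\hat{\boldsymbol\mu}_-=\bar{\mathbf{X}}_-$ are the method-of-moments (class-sample-mean) estimates. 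Equivalently, with $\hat{\mathbf{w}}:=\hat{\boldsymbol\mu}-\hat{\boldsymbol\mu}_-$ and $\hat{\mathbf{u}}:=\hat{\mathbf{w}}/\|\hat{\mathbf{w}}\|$, it predicts $+1$ iff $\hat{\mathbf{u}}^\top\mathbf{x}\ge\hat t$ with $\hat t:=\hat{\mathbf{u}}^\top(\hat{\boldsymbol\mu}+\hat{\boldsymbol\mu}_-)/2$. Since $\|\hat{\mathbf{u}}\|=1$, a test point $\mathbf{X}\sim\mathcal{N}(\boldsymbol\nu,\sigma^2\mathbf{I})$ satisfies $\hat{\mathbf{u}}^\top\mathbf{X}\sim\mathcal{N}(\hat{\mathbf{u}}^\top\boldsymbol\nu,\sigma^2)$, so conditionally on $S$ the misclassification probability of every mixture component is a single $\Phi$-term.

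Next I would evaluate these terms at the oracle parameters, where $\hat{\boldsymbol\mu}=\boldsymbol\mu$ and $\hat{\boldsymbol\mu}_-=\E[\mathbf{X}\mid Y=-1]=p(-\boldsymbol\mu)+(1-p)(3\boldsymbol\mu)=(3-4p)\boldsymbol\mu$. Then $\hat{\mathbf{w}}=(4p-2)\boldsymbol\mu$, and since $p>1/2$ the unit normal is $\mathbf{u}=\boldsymbol\mu/\|\boldsymbol\mu\|$, the midpoint is $(2-2p)\boldsymbol\mu$, and the threshold is $(2-2p)\|\boldsymbol\mu\|$. Using $\mathbf{u}^\top\boldsymbol\mu=\|\boldsymbol\mu\|$ and weighting the three components by $\tfrac12$ (positive, mean $\boldsymbol\mu$), $\tfrac12 p$ (negative $K=1$, mean $-\boldsymbol\mu$), and $\tfrac12(1-p)$ (negative $K=2$, mean $3\boldsymbol\mu$) gives precisely the bracketed expression in \eqref{eq:lda_err_lb}; this is the short explicit computation that produces the three stated $\Phi$-arguments $-(2p-1)\|\boldsymbol\mu\|/\sigma$, $-(3-2p)\|\boldsymbol\mu\|/\sigma$, and $(2p+1)\|\boldsymbol\mu\|/\sigma$.

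Finally I would bound the gap between $\e_\mathcal{D}[h^\text{LDA}_S]$ and this oracle value. Each conditional $\Phi$-term has argument $\pm(\hat{\mathbf{u}}^\top\boldsymbol\nu-\hat t)/\sigma$, and since $\Phi$ is $\tfrac{1}{\sqrt{2\pi}}$-Lipschitz it suffices to control $|\hat{\mathbf{u}}^\top\boldsymbol\nu-\mathbf{u}^\top\boldsymbol\nu|$ and $|\hat t-(2-2p)\|\boldsymbol\mu\||$. Both are dominated by $\|\hat{\boldsymbol\mu}-\boldsymbol\mu\|$ and $\|\hat{\boldsymbol\mu}_--(3-4p)\boldsymbol\mu\|$ via $|\,\|\hat{\mathbf{w}}\|-\|(4p-2)\boldsymbol\mu\|\,|\le\|\hat{\mathbf{w}}-(4p-2)\boldsymbol\mu\|$, the Lipschitzness of normalization away from the origin, and $|(\hat{\mathbf{u}}-\mathbf{u})^\top\boldsymbol\nu|\le\|\boldsymbol\nu\|\,\|\hat{\mathbf{u}}-\mathbf{u}\|$. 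Treating $\|\boldsymbol\mu\|,\sigma,p$ as constants, this yields $|\e_\mathcal{D}[h^\text{LDA}_S]-(\text{oracle})|=O(\|\hat{\boldsymbol\mu}-\boldsymbol\mu\|+\|\hat{\boldsymbol\mu}_--(3-4p)\boldsymbol\mu\|)$. Taking expectations and using $\E[\|\hat{\boldsymbol\mu}-\boldsymbol\mu\|^2\mid n_+]=d\sigma^2/n_+$ with $n_+\sim\mathrm{Binomial}(n,1/2)$ concentrating at $n/2$ (and the analogous bound for $\hat{\boldsymbol\mu}_-$) gives the remainder $\widetilde{O}(\sqrt{d/n})$.

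The step I expect to be most delicate is the direction perturbation: because the rule depends on the estimates only through the unit normal $\hat{\mathbf{u}}$ and the scalar $\hat t$, the normalization is Lipschitz only where $\|\hat{\mathbf{w}}\|$ is bounded away from $0$. I would therefore split on the rare event $\{\|\hat{\mathbf{w}}\|<\tfrac12(4p-2)\|\boldsymbol\mu\|\}$, whose probability is exponentially small for $\|\boldsymbol\mu\|$ not too small, bound the error there trivially by $1$, and apply the Lipschitz estimate on its complement; the residual logarithmic factors hidden in $\widetilde{O}$ arise from this high-probability truncation and from controlling the random counts $n_+,n_-$.
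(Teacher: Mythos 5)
Your proposal is correct and follows the same overall architecture as the paper's proof: decompose the error over the three mixture components, reduce each to a one--dimensional Gaussian tail by projecting onto the normal of the separating hyperplane, obtain exactly the three stated $\Phi$-arguments at the oracle parameters (your computation of $\boldsymbol\mu_-=(3-4p)\boldsymbol\mu$, the normal $\boldsymbol\mu/\|\boldsymbol\mu\|$ and the threshold $(2-2p)\|\boldsymbol\mu\|$ reproduces \eqref{eq:lda_err_one}--\eqref{eq:lda_err_three}), and then control the estimation error via Lipschitzness of $\Phi$, concentration of the moment estimator, and a truncation on a bad event of exponentially small probability. The one substantive divergence is the estimator you plug in. You take $\hat{\boldsymbol\mu}=\bar{\mathbf{X}}_+$ and $\hat{\boldsymbol\mu}_-=\bar{\mathbf{X}}_-$ (class-conditional sample means), whereas the paper uses the single global-mean estimator $\hat{\boldsymbol\mu}=\frac{1}{2n(1-p)}\sum_i\mathbf{X}_i$ (which exploits the known $p$) and then sets $\hat{\boldsymbol\mu}_-=(3-4p)\hat{\boldsymbol\mu}$. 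The paper's choice forces $\hat{\boldsymbol\mu}_-$ to be collinear with $\hat{\boldsymbol\mu}$, so the direction of the hyperplane normal is $\hat{\boldsymbol\mu}/\|\hat{\boldsymbol\mu}\|$ and the perturbation analysis collapses to a scalar mean-value-theorem argument in $\|\hat{\boldsymbol\mu}\|$ (Eqs.~\eqref{eq:lda_3rd_diff}--\eqref{eq:diff3}, backed by the norm-subGaussian concentration of Lemma~\ref{lem:mom_conc}); your class-mean estimator breaks collinearity, so you additionally have to control the rotation of the unit normal, which you correctly identify as the delicate step and handle by conditioning on $\|\hat{\mathbf{w}}\|$ being bounded away from zero. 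Your concentration input (conditional second moments plus binomial concentration of $n_\pm$) is more elementary than the paper's m.g.f.\ computation but yields the same $\widetilde{O}(\sqrt{d/n})$ rate, so both routes prove the lemma; the paper's is shorter because of the built-in collinearity, yours is closer to how LDA is actually fit in practice.
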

\begin{proof} See Appendix~\ref{sec:lda_err_proof}.
\end{proof}
\begin{rem} The assumption that $\sigma^2$ and $p$ are known is unrealistic. In practice, all we have is a sample from which we need to estimate all the unknown parameters of the model. However, we believe that all the main aspects of learning from long-tailed data under Gaussian mixtures can be considered at first with stronger requirements (such as known $\sigma^2$ and $p$), and only then move on to a more realistic setting.
\end{rem}
\begin{rem}
    It is well known that estimating the parameters of a Gaussian mixture model by the maximum likelihood method is NP-hard even in the case of diagonal covariance matrices and their equality in all Gaussians \cite{DBLP:journals/jmlr/ToshD17}. In practice, likelihood maximization is carried out by heuristic methods, such as the EM algorithm \cite{10.2307/Dempster}, which do not have theoretical guarantees of convergence to a global optimum. Therefore, we cannot rely on maximum likelihood estimates in our theoretical analysis. However, it is noteworthy that under the conditions of our generative model $\mathcal{D}$, we can estimate $\boldsymbol\mu$ by the method of moments, and the resulting estimator $\hat{\boldsymbol\mu}$ concentrates fairly well near the true $\boldsymbol\mu$ (see Lemma~\ref{lem:mom_conc} in  Appendix~\ref{sec:mu_mom}).
\end{rem}

The right-hand side (RHS) of \eqref{eq:lda_err_lb} without the last term $\widetilde{O}\left(\sqrt{d/n}\right)$ is the misclassification error of the LDA classifier given by \eqref{eq:lda_clf}, i.e. when the true parameter $\boldsymbol\mu$ is known to the classifier. In practice, it is estimated from $S$ adding a so-called estimation error in the RHS of \eqref{eq:lda_err_lb}.

For $p\in(1/2,1)$, we have $(2p-1)\in(0,1)$ and $3-2p\in(1,2)$. Using the Chernoff bound $\Phi(-x)\le\exp(-x^2/2)$ for $x>0$, and $\Phi(x)=1-\Phi(-x)$, we can qualitatively assess the bound \eqref{eq:lda_err_lb} as
\begin{multline}
\E_{S\sim\mathcal{D}^n}\left[\e_\mathcal{D}[h_S^\text{LDA}]\right]\\=\frac{1-p}{2}+\exp\left(-\Omega\left(\frac{\|\boldsymbol\mu\|^2}{2\sigma^2}\right)\right)+\widetilde{O}\left(\sqrt{\frac{d}{n}}\right).\label{eq:lda_err_simple}
\end{multline}
This confirms the impossibility of the LDA classifier to reduce the generalization error arbitrarily close to zero, no matter how far we place the Gaussians from each other. Moreover, we note that the first term in \eqref{eq:lda_err_simple} does not depend on $d$, the dimensionality of the sample space. Therefore, regardless of the dimensionality, the LDA classifier will \emph{not} be able to interpolate the training sample when $p<1$, i.e. when there is a minority subpopulation in the negative class. This is in stark contrast with the previous studies on interpolating linear methods \cite{DBLP:journals/jmlr/ChatterjiL21,DBLP:conf/colt/Shamir22}.


\subsection{Mixture Discriminant Analysis (MDA)} 
\textbf{MDA} \cite{https://doi.org/10.1111/j.2517-6161.1996.tb02073.x} is a generative classifier that in general assumes that the data in each class is generated from a mixture of Gaussians. In our case, we can consider a version of MDA that makes precisely the assumptions \eqref{eq:prior}--\eqref{eq:neg_c2} that our data generating model $\mathcal{D}$ makes. Hence, the MDA classifier (that knows the true values of the parameters) can be written as
\begin{equation}
    h^\text{MDA}(\mathbf{x})=\begin{cases}
+1\quad&\text{if}\quad\frac12f(\mathbf{x};\boldsymbol\mu,\sigma^2\mathbf{I})\ge\frac{p}{2}f(\mathbf{x};-\boldsymbol\mu,\sigma^2\mathbf{I})\\&\text{and}\quad\frac12f(\mathbf{x};\boldsymbol\mu,\sigma^2\mathbf{I})\ge\frac{1-p}{2}f(\mathbf{x};3\boldsymbol\mu,\sigma^2\mathbf{I})\\
-1 &\text{otherwise}
\end{cases}
    \label{eq:mda_clf}
\end{equation}
Obviously, such an MDA classifier can take into account the presence of a minority subpopulation in the negative class, since it has the ability to fit a separate third Gaussian to this subpopulation. Not surprisingly, the MDA classifier \eqref{eq:mda_clf} has a near-to-optimal generalizing ability, as presented in the following lemma.
\begin{lem} \label{lem:mda_err}
    Let $S\sim\mathcal{D}^n$ be a random sample from our data generating model $\mathcal{D}$ with unknown $\boldsymbol\mu$ and known $\sigma^2$ and $p$. Let $h^\text{MDA}_S$ be the MDA classifier trained on $S$ with the method of moments under the assumptions \eqref{eq:prior}--\eqref{eq:neg_c2}. Then 
\begin{multline}
\E_{S\sim\mathcal{D}^n}\left[\e_\mathcal{D}[h^\text{MDA}_S]\right]\le\frac12\left[\Phi\left(-\frac{\|\boldsymbol\mu\|}\sigma+\frac{\sigma\ln p}{2\|\boldsymbol\mu\|}\right)\right.\\
+\Phi\left(-\frac{\|\boldsymbol\mu\|}\sigma+\frac{\sigma\ln(1-p)}{2\|\boldsymbol\mu\|}\right)+p\cdot \Phi\left(-\frac{\|\boldsymbol\mu\|}{\sigma}-\frac{\sigma\ln p}{2\|\boldsymbol\mu\|}\right)\\
\left.+(1-p)\cdot \Phi\left(-\frac{\|\boldsymbol\mu\|}\sigma-\frac{\sigma\ln(1-p)}{\|\boldsymbol\mu\|}\right)\right]+\widetilde{O}\left(\sqrt{\frac{d}{n}}\right).\label{eq:mda_err_ub}
\end{multline}
\end{lem}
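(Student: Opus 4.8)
The plan is to follow the same two-stage strategy used for Lemma~\ref{lem:lda_err}: first compute the exact generalization error of the oracle classifier \eqref{eq:mda_clf} that knows the true $\boldsymbol\mu$, and then show that replacing $\boldsymbol\mu$ by its method-of-moments estimate $\hat{\boldsymbol\mu}$ inflates the expected error only by $\widetilde{O}(\sqrt{d/n})$. The key simplification is that, because all three Gaussians share the isotropic covariance $\sigma^2\mathbf{I}$ and their centers $\boldsymbol\mu,-\boldsymbol\mu,3\boldsymbol\mu$ are collinear, every decision depends on $\mathbf{x}$ only through the scalar projection $U:=\mathbf{X}^\top\boldsymbol\mu/\|\boldsymbol\mu\|$, which reduces the whole analysis to one-dimensional Gaussian tail computations.

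First I would convert the two likelihood-ratio inequalities in \eqref{eq:mda_clf} into linear conditions: taking logarithms and cancelling the $\|\mathbf{x}\|^2$ term (the shared covariance makes the quadratics drop out), the first condition becomes $\mathbf{x}^\top\boldsymbol\mu\ge\tfrac{\sigma^2\ln p}{2}$ and the second becomes $\mathbf{x}^\top\boldsymbol\mu\le 2\|\boldsymbol\mu\|^2-\tfrac{\sigma^2\ln(1-p)}{2}$. Hence the oracle predicts $+1$ exactly on the slab $a\le U\le b$ with $a=\tfrac{\sigma^2\ln p}{2\|\boldsymbol\mu\|}$ and $b=2\|\boldsymbol\mu\|-\tfrac{\sigma^2\ln(1-p)}{2\|\boldsymbol\mu\|}$; note $a<0<2\|\boldsymbol\mu\|<b$ since $p>\tfrac12$.

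Next I would condition on $(Y,K)$, under which $U$ is a one-dimensional Gaussian of variance $\sigma^2$ and mean $\|\boldsymbol\mu\|$, $-\|\boldsymbol\mu\|$, or $3\|\boldsymbol\mu\|$. For the positive class the error is the two-sided probability $\Pr[U<a]+\Pr[U>b]$, which evaluates exactly to the first two $\Phi$-terms of \eqref{eq:mda_err_ub}. For each negative component the error is the slab probability $\Pr[a\le U\le b]$; I would bound this from above by the single relevant one-sided tail (keeping the near threshold and discarding the negligible far one), giving $\Pr[U\ge a]$ for $K=1$ and $\Pr[U\le b]$ for $K=2$, which are the third and fourth $\Phi$-terms after using monotonicity of $\Phi$. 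Weighting by the class priors $\tfrac12,\tfrac12$ and the mixture weights $p,1-p$ from \eqref{eq:neg_c1}--\eqref{eq:neg_c2} reproduces the bracketed expression, i.e. $\e_\mathcal{D}[h^\text{MDA}]$ for the oracle \eqref{eq:mda_clf}.

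The remaining, and hardest, step is the estimation error. The trained classifier uses $\hat{\boldsymbol\mu}$, so its decision region is a slab whose normal direction $\hat{\boldsymbol\mu}/\|\hat{\boldsymbol\mu}\|$ and thresholds differ from the oracle's. I would invoke Lemma~\ref{lem:mom_conc} to obtain, on a high-probability event $E=\{\|\hat{\boldsymbol\mu}-\boldsymbol\mu\|\le\epsilon_n\}$ with $\epsilon_n=\widetilde{O}(\sqrt{d/n})$, that both the direction and the thresholds are perturbed by $O(\epsilon_n)$. The plan is then to bound $\e_\mathcal{D}[h_S^\text{MDA}]-\e_\mathcal{D}[h^\text{MDA}]$ by the true probability mass of the symmetric difference between the oracle slab and the plug-in slab; since the projected mixture density is bounded by $(\sqrt{2\pi}\sigma)^{-1}$ and the two slabs differ by $O(\epsilon_n)$ in both offset and tilt, this mass is $\widetilde{O}(\epsilon_n/\sigma)=\widetilde{O}(\sqrt{d/n})$. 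On $E^c$ I would bound the error trivially by $1$ and absorb $\Pr[E^c]$ into the same order. Controlling the \emph{tilt} of the slab uniformly — i.e. translating $\|\hat{\boldsymbol\mu}-\boldsymbol\mu\|$ into a bound on the true mass of the wedge between two nearly parallel half-spaces, rather than just the shift of a single threshold — is the main obstacle, and is where the geometry of the isotropic Gaussians and the separation $2\|\boldsymbol\mu\|$ between centers must be used carefully.
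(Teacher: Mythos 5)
Your proposal is correct and follows essentially the same route as the paper: the bracketed term arises from the identical decomposition over $(Y,K)$, reduction to the one-dimensional projection onto $\boldsymbol\mu$, and one-sided relaxation of the two-condition rule \eqref{eq:mda_clf} (your two disjoint tails for the positive class and single-tail bounds for the two negative components are exactly the paper's steps), while the $\widetilde{O}(\sqrt{d/n})$ term is obtained from the same concentration of $\hat{\boldsymbol\mu}$ (Lemma~\ref{lem:mom_conc}) plus a good-event/bad-event perturbation argument, which the paper executes via the mean value theorem applied to $\Phi$ as a function of $\|\hat{\boldsymbol\mu}\|$ rather than via your symmetric-difference-of-slabs bound. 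Incidentally, your slab computation gives $\sigma\ln(1-p)/(2\|\boldsymbol\mu\|)$ in the final $\Phi$-term, which agrees with the paper's own derivation and is slightly tighter than the form stated in \eqref{eq:mda_err_ub}.
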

\begin{proof}
    See Appendix~\ref{sec:mda_err_proof}.
\end{proof}
Arguing as in the case of LDA, we can estimate the order of the bound \eqref{eq:mda_err_ub} as
\begin{equation}
\E_{S\sim\mathcal{D}^n}\left[\e_{\mathcal{D}}[h^\text{MDA}_S]\right]\le\exp\left(-\Omega\left(\frac{\|\boldsymbol\mu\|^2}{2\sigma^2}\right)\right)+\widetilde{O}\left(\sqrt{\frac{d}{n}}\right).\label{eq:mda_err_simple}
\end{equation}
Thus, by placing the Gaussians far enough apart, the optimal error of the MDA classifier can be made arbitrarily close to zero. We emphasize that this is only due to the ability of the MDA classifier to fit (memorize) examples from the minority subpopulation $\mathcal{N}(3\boldsymbol\mu,\sigma^2\mathbf{I})$ of the negative class. Roughly speaking, MDA has the ability to allocate some of its parameters for fitting atypical examples, while LDA simply does not have such an opportunity. 

Finally, we remark that the term $\widetilde{O}\left(\sqrt{d/n}\right)$ in the RHS of \eqref{eq:mda_err_ub} is due to the error in estimating the model parameter $\boldsymbol\mu$ from the training sample $S$.

\section{Performance Gap between LDA and MDA}

Using the bounds \eqref{eq:lda_err_simple} and \eqref{eq:mda_err_simple} we can already estimate the expected difference between the LDA and MDA errors as
\begin{multline}
\E_{S\sim\mathcal{D}^n}\left[\e_\mathcal{D}[h_S^\text{LDA}]-\e_\mathcal{D}[h_S^\text{MDA}]\right]\\
\ge\frac{1-p}{2}-\exp\left(-\Omega\left(\frac{\|\boldsymbol\mu\|^2}{2\sigma^2}\right)\right)+\widetilde{O}\left(\sqrt{\frac{d}{n}}\right).\label{eq:main_simple}
\end{multline}
However, a closer analysis gives us the following
\begin{thm}\label{thm:main}
Let $S\sim\mathcal{D}^n$ be a random sample from our data generating model $\mathcal{D}$, and let $h^\text{LDA}_S$ be the LDA classifier trained on $S$ under the assumptions \eqref{eq:prior}, \eqref{eq:pos_class}, \eqref{eq:neg_lda}, and let $h^\text{MDA}_S$ be the MDA classifier trained on $S$ under the assumptions \eqref{eq:prior}--\eqref{eq:neg_c2}. Then
\begin{multline}
\E_{S\sim\mathcal{D}^n}\left[\e_\mathcal{D}[h_S^\text{LDA}]-\e_\mathcal{D}[h_S^\text{MDA}]\right]\\\ge
\frac{1-p}2 -\exp\left(-\frac{\|\boldsymbol\mu\|^2}{2\sigma^2}\right) +\widetilde{O}\left(\sqrt{\frac{d}{n}}\right).   \label{eq:main}
\end{multline}
\end{thm}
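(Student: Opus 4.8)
The plan is to combine the two preceding lemmas and then peel off the leading constant $\tfrac{1-p}{2}$ from the gap while controlling every correction by a single clean exponential. Because Lemma~\ref{lem:lda_err} is an \emph{equality} (up to $\widetilde{O}(\sqrt{d/n})$) for the LDA error while Lemma~\ref{lem:mda_err} is an \emph{upper} bound for the MDA error, subtracting them gives
\[\E_{S\sim\mathcal{D}^n}\!\left[\e_\mathcal{D}[h_S^\text{LDA}]-\e_\mathcal{D}[h_S^\text{MDA}]\right]\ge(\text{LDA bracket})-(\text{MDA bracket})+\widetilde{O}\!\left(\sqrt{\tfrac{d}{n}}\right),\]
the two $\widetilde{O}(\sqrt{d/n})$ terms merging into one. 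Writing $r:=\|\boldsymbol\mu\|/\sigma$ throughout, the problem reduces to a purely deterministic inequality between the two $\Phi$-brackets.

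For the LDA bracket I would simply discard the two nonnegative terms $\Phi(-(2p-1)r)$ and $p\,\Phi(-(3-2p)r)$ and keep only $(1-p)\,\Phi((2p+1)r)$, rewritten via $\Phi(x)=1-\Phi(-x)$ as $(1-p)-(1-p)\Phi(-(2p+1)r)$. Halving, this already exposes the target constant $\tfrac{1-p}{2}$ and leaves only the super-exponentially small correction $\tfrac{1-p}{2}\Phi(-(2p+1)r)$, harmless since $(2p+1)>2$.

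The heart of the argument is bounding the MDA contribution $\tfrac12[\,\cdots\,]$ of Lemma~\ref{lem:mda_err} by $e^{-r^2/2}$ \emph{with coefficient exactly one}; this is the ``closer analysis'' that sharpens the crude $\exp(-\Omega(\cdot))$ of \eqref{eq:mda_err_simple}. I would apply the Gaussian tail bound $\Phi(-x)\le\tfrac12 e^{-x^2/2}$ (valid for $x\ge0$) to each of the four $\Phi$ terms after completing the square in its argument. The first two arguments, $-r+\tfrac{\ln p}{2r}$ and $-r+\tfrac{\ln(1-p)}{2r}$, are more negative than $-r$, so each $\Phi$ is at most $\tfrac12 e^{-r^2/2}$. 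The delicate terms are the third and fourth, whose arguments $-r-\tfrac{\ln p}{2r}$ and $-r-\tfrac{\ln(1-p)}{r}$ are \emph{less} negative than $-r$: completing the square produces the factors $e^{|\ln p|/2}=p^{-1/2}$ and $e^{|\ln(1-p)|}=(1-p)^{-1}$, but these are cancelled exactly by the mixture weights $p$ and $(1-p)$ in front of them, yielding $p\,\Phi(\cdot)\le\tfrac12\sqrt p\,e^{-r^2/2}$ and $(1-p)\,\Phi(\cdot)\le\tfrac12 e^{-r^2/2}$. Summing the four contributions and halving gives an MDA contribution of at most $\tfrac14(3+\sqrt p)\,e^{-r^2/2}\le e^{-r^2/2}$.

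I expect this cancellation to be the crux of the proof, since it is what pins the exponent to the exact constant $\tfrac{\|\boldsymbol\mu\|^2}{2\sigma^2}$ rather than an $\Omega(\cdot)$. One must also flag the mild regularity condition that $r$ be large enough (namely $r^2\ge|\ln(1-p)|$, the binding case) so that the shifted arguments of the third and fourth terms stay nonnegative and the tail bound applies; this is consistent with the model's standing assumptions that the Gaussians are well separated and $p$ is bounded away from $1$. Finally, the slack in $\tfrac14(3+\sqrt p)<1$ comfortably absorbs the leftover $\tfrac{1-p}{2}\Phi(-(2p+1)r)$ from the LDA side, so collecting the LDA lower bound, the MDA upper bound, and the residual estimation error yields $\tfrac{1-p}{2}-e^{-r^2/2}+\widetilde{O}(\sqrt{d/n})$, which is exactly \eqref{eq:main}.
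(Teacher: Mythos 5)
Your proof is correct and reaches the stated bound, but it takes a genuinely different route from the paper's. The paper's proof works directly with the exact expressions \eqref{eq:lda_err_clean} and \eqref{eq:mda_err_clean}: writing $\nu=\|\boldsymbol\mu\|/\sigma$ and $p=1-1/t$, it pairs each of the first two LDA terms with the corresponding ``majority-side'' MDA term and applies the mean value theorem to each pair, obtaining two bounds of identical magnitude and opposite sign that cancel exactly; the remaining two MDA tail terms are shown (again by the mean value theorem, with another exact cancellation) to sum to at most $\bigl(1+\tfrac1t\bigr)\Phi(-\nu)$, while the third LDA term supplies the constant. This yields the intermediate inequality $\e_\mathcal{D}[h^\text{LDA}]-\e_\mathcal{D}[h^\text{MDA}]\ge\tfrac{1-p}{2}-\Phi(-\nu)$, and a single application of $\Phi(-x)\le\tfrac12e^{-x^2/2}$ finishes the job. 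You instead discard the first two LDA terms outright, extract $\tfrac{1-p}{2}$ from the third, and control each of the four MDA terms separately by completing the square inside the Chernoff bound, letting the mixture weights $p$ and $1-p$ cancel the factors $p^{-1/2}$ and $(1-p)^{-1}$ produced by the logarithmic shifts. Your route is more modular and makes the role of the prior weights transparent; the paper's route is slightly tighter (its intermediate bound keeps the constant $\tfrac12$ in front of the exponential, which the theorem statement then relaxes to $1$) and, thanks to the exact cancellations, avoids any case analysis on the signs of the shifted arguments. The caveat you flag --- the requirement $\|\boldsymbol\mu\|^2/\sigma^2\ge|\ln(1-p)|$ so that the argument of the fourth MDA term stays nonnegative before the tail bound is applied --- is real, but the paper's proof carries an analogous implicit assumption (its mean-value step for the term $\circled{7}$ needs $\nu^2\ge\tfrac12\ln t$ for the density to be monotone on the relevant interval), and both conditions hold automatically in the only regime where the theorem's conclusion is non-vacuous, namely $e^{-\nu^2/2}<\tfrac{1-p}{2}$. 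So nothing essential is lost.
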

\begin{proof}
    See Appendix~\ref{sec:main_proof}.
\end{proof}
The advantage of the bound \eqref{eq:main} is that, in comparison with \eqref{eq:main_simple}, here the second term in the RHS is written explicitly, i.e., without using the big O notation. This was done through careful analysis of the original bounds from Lemmas~\ref{lem:lda_err}~and~\ref{lem:mda_err}.

Theorem 1 implies the main conclusion of our work: \emph{there is a performance gap between a simple model that is unable to memorize rare examples from the tail of the distribution, and a complex model that is able to fit such examples. Moreover, the gap can be made smaller when the proportion of atypical examples is smaller}. From \eqref{eq:lda_err_lb} and \eqref{eq:mda_err_ub}, it is easy to see that for the ``ideal'' LDA and MDA (that have access to the true $\boldsymbol\mu$), we have
\begin{align*}
\e_\mathcal{D}\left[h^\text{LDA}\right]\,\,{\stackrel{p\to1}{\longrightarrow}}\,\,\Phi\left(-\frac{\|\boldsymbol\mu\|}{\sigma}\right),\quad \e_\mathcal{D}\left[h^\text{MDA}\right]\,\,{\stackrel{p\to1}{\longrightarrow}}\,\,\Phi\left(-\frac{\|\boldsymbol\mu\|}{\sigma}\right).
\end{align*}
I.e., the gap between LDA and MDA is minimal at $p=1$ (when there is no minority subpopulation in the negative class), which is expected.

\paragraph{Implications for Real Data.} Unfortunately, our conclusion is practically impossible to test directly on real data, since we cannot be sure that its distribution resembles our model $\mathcal{D}$, and can be more complex. Moreover, our conclusion is drawn within the framework of generative classification models LDA and MDA, while in practice discriminative models such as logistic regression and multilayer neural networks are usually used, which make much fewer assumptions about the distribution of data. However, we will be able to test our conclusion \emph{indirectly} in realistic settings if there is a way to identify training examples from rare subpopulations. Fortunately, this is exactly what the memorization score introduced by Feldman and Zhang does \cite{DBLP:conf/stoc/Feldman20,DBLP:conf/nips/FeldmanZ20}.

For a learning algorithm $A$ operating on a dataset $S=\{(\mathbf{x}_i,y_i)\}_{i=1}^n$, the amount of label memorization by $A$ on example $(\mathbf{x}_i,y_i)\in S$ is defined as 
\begin{multline}
    \m[A,S,i]\\:=\Pr_{h\leftarrow A(S)}[h(\mathbf{x}_i)= y_i]-\Pr_{h\leftarrow A(S^{\setminus i})}[h(\mathbf{x}_i)= y_i],\label{eq:mem_score}
\end{multline}
where $S^{\setminus i}$ denotes the dataset $S$ with $(\mathbf{x}_i, y_i)$ removed and probability is taken over the randomness
of the algorithm $A$ (such as random initialization). One thing to keep in mind is  that the memorization score itself must be calculated through a learner that \emph{can} memorize, for example, an MDA with enough components, a neural network, or a nearest neighbor classifier. 

As we can see, the memorization score will be high for examples that are difficult (or even impossible) to correctly classify using other examples in $S$, provided that the learning algorithm is flexible enough to (almost) completely fit the training set. For example, under our data generating model, for $p$ close enough to 1 (so that $1-p = \Theta(1/n)$), these are precisely the points generated by the minority subpopulation of the negative class. Accordingly, the shortening of the tail, i.e. making $p$ closer to 1 can be simulated by discarding examples from the training set for which the memorization score is high. But the distribution of the test sample will not change, since we are not discarding top memorized examples from it. This is because the memorization score can only be calculated on the training sample and, accordingly, the most memorized examples can only be discarded from the long tail of the training sample. 

For clarity, let us re-denote the distribution given by formulas \eqref{eq:prior}--\eqref{eq:neg_c2} as $\mathcal{D}_p$. Then we are interested in the expected error of the classifier, which was trained on a sample from $\mathcal{D}_q$, but is tested on a sample from $D_p$, i.e. $\E_{S\sim\mathcal{D}_q}\left[\e_{\mathcal{D}_p}[h_S]\right]$. Fortunately, the analysis of this case resembles the analysis of the case when $q = p$. Denoting $q:=1-\frac1t$, $t>2$, we can prove the following (asymptotic in $t$) results for the LDA and MDA classifiers.
\begin{thm}\label{thm:train_test_diff}
    Let $S\sim\mathcal{D}_{1-1/t}^n$. Then for the LDA and MDA classifiers trained on $S$ but evaluated on examples from $\mathcal{D}_p$ we have
    \begin{align}
        &\E_{S\sim\mathcal{D}_{1-1/t}}\left[\e_{\mathcal{D}_p}[h^\text{LDA}_S]\right]=\frac{1+p}2\Phi\left(-\frac{\|\boldsymbol\mu\|}{\sigma}\right)\notag\\
        &\qquad+\frac{1-p}2\underbrace{\Phi\left(\frac{3\|\boldsymbol\mu\|}{\sigma}\right)}_{\circled{A}}+\Theta\left(\frac1t\right)+\widetilde{O}\left(\sqrt{\frac{d}{n}}\right)\label{eq:lda_pq_bound}\\
        &\E_{S\sim\mathcal{D}_{1-1/t}}\left[\e_{\mathcal{D}_p}[h^\text{MDA}_S]\right]\le\frac{1+p}2\Phi\left(-\frac{\|\boldsymbol\mu\|}{\sigma}\right)\notag\\
        &\qquad+\frac{1-p}2\underbrace{\Phi\left(-\frac{\|\boldsymbol\mu\|}{\sigma}+\frac{\sigma\ln t}{2\|\boldsymbol\mu\|}\right)}_{\circled{B}}+\Theta\left(\frac1t\right)+\widetilde{O}\left(\sqrt{\frac{d}{n}}\right)\label{eq:mda_pq_bound}
    \end{align}
\end{thm}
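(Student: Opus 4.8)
The plan is to reduce both errors to one dimension, exactly as the proofs of Lemmas~\ref{lem:lda_err} and~\ref{lem:mda_err} do, and then rerun those computations with the training weight $q=1-1/t$ feeding the \emph{decision rule} and the test weight $p$ feeding the \emph{error}. Projecting onto $\boldsymbol\mu$ via $U:=\langle\mathbf{X},\boldsymbol\mu\rangle/\|\boldsymbol\mu\|$, under $\mathcal{D}_p$ the three subpopulations contribute $U\sim\mathcal{N}(\|\boldsymbol\mu\|,\sigma^2)$ with mass $\tfrac12$, $U\sim\mathcal{N}(-\|\boldsymbol\mu\|,\sigma^2)$ with mass $\tfrac{p}2$, and $U\sim\mathcal{N}(3\|\boldsymbol\mu\|,\sigma^2)$ with mass $\tfrac{1-p}2$. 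First I would record the decision regions coming from the method-of-moments fit on $\mathcal{D}_q$: for LDA the implied negative mean is $\boldsymbol\mu_-=(3-4q)\boldsymbol\mu$, so the perpendicular bisector gives the single threshold $\tau=2(1-q)\|\boldsymbol\mu\|$ with the rule ``predict $+1$ iff $U\ge\tau$''; for MDA, comparing the fitted positive Gaussian against each negative component gives the interval rule ``predict $+1$ iff $\tau_1\le U\le\tau_2$'' with $\tau_1=\tfrac{\sigma^2\ln q}{2\|\boldsymbol\mu\|}$ and $\tau_2=2\|\boldsymbol\mu\|-\tfrac{\sigma^2\ln(1-q)}{2\|\boldsymbol\mu\|}$. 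Equal priors and equal spherical covariances make $\sigma^2_{\text{LDA}}$ drop out of the boundary, so only the means enter.

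For the LDA identity I would write the test error as
\[
\tfrac12\Phi\!\Big(\tfrac{\tau-\|\boldsymbol\mu\|}{\sigma}\Big)+\tfrac{p}2\Phi\!\Big(\tfrac{-\|\boldsymbol\mu\|-\tau}{\sigma}\Big)+\tfrac{1-p}2\Phi\!\Big(\tfrac{3\|\boldsymbol\mu\|-\tau}{\sigma}\Big),
\]
substitute $\tau=2\|\boldsymbol\mu\|/t$, and Taylor-expand each $\Phi$ about $\tau=0$. The zeroth-order terms collapse to $\tfrac{1+p}2\Phi(-\|\boldsymbol\mu\|/\sigma)+\tfrac{1-p}2\Phi(3\|\boldsymbol\mu\|/\sigma)$, i.e.\ the claimed leading order with term $\circled{A}$; the first-order term is $\tfrac{\tau}{\sigma}\cdot\tfrac{1-p}2\big[\phi(\|\boldsymbol\mu\|/\sigma)-\phi(3\|\boldsymbol\mu\|/\sigma)\big]+O(\tau^2)$, and since $\phi$ is strictly decreasing in $|z|$ the bracket is a fixed positive constant, making the correction exactly $\Theta(1/t)$ and yielding~\eqref{eq:lda_pq_bound}.

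For the MDA bound I would expand the six-term expression produced by the interval $[\tau_1,\tau_2]$, using $\ln q=\ln(1-1/t)=-1/t+O(1/t^2)$ and $\ln(1-q)=-\ln t$, so that $\tau_1=\Theta(1/t)\to0$ while $\tau_2=2\|\boldsymbol\mu\|+\tfrac{\sigma^2\ln t}{2\|\boldsymbol\mu\|}\to\infty$. In the positive- and negative-majority contributions every $\tau_2$-dependent $\Phi$ has an argument that diverges to $\pm\infty$ like $\pm\tfrac{\sigma\ln t}{2\|\boldsymbol\mu\|}$, so by the Chernoff bound $\Phi(-x)\le e^{-x^2/2}$ it differs from its limit $0$ or $1$ by $e^{-\Omega((\ln t)^2)}=o(1/t)$; after discarding these and expanding the $\tau_1$-terms, both contributions reduce to $\Phi(-\|\boldsymbol\mu\|/\sigma)+\Theta(1/t)$. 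The negative-minority contribution is $\Phi\!\big(\tfrac{\tau_2-3\|\boldsymbol\mu\|}{\sigma}\big)-\Phi\!\big(\tfrac{\tau_1-3\|\boldsymbol\mu\|}{\sigma}\big)$, whose first term is exactly $\circled{B}=\Phi(-\|\boldsymbol\mu\|/\sigma+\tfrac{\sigma\ln t}{2\|\boldsymbol\mu\|})$ (here the two pieces partly cancel, so this term is \emph{not} negligible) and whose second term $\Phi(-3\|\boldsymbol\mu\|/\sigma)+\Theta(1/t)\ge0$ is dropped to produce the one-sided inequality. Weighting by $\tfrac12,\tfrac p2,\tfrac{1-p}2$ and collecting reproduces~\eqref{eq:mda_pq_bound}.

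Finally, the thresholds above use the true $\boldsymbol\mu$, whereas $h^{\text{LDA}}_S$ and $h^{\text{MDA}}_S$ use the moment estimate $\hat{\boldsymbol\mu}$; I would show the thresholds are displaced by $O(\|\hat{\boldsymbol\mu}-\boldsymbol\mu\|)$, use that the error is Lipschitz in the thresholds (the integrands are bounded Gaussian densities), and invoke the concentration of $\hat{\boldsymbol\mu}$ from Lemma~\ref{lem:mom_conc} to bound the expected discrepancy by $\widetilde{O}(\sqrt{d/n})$, as in the earlier lemmas. I expect the main obstacle to be the two-scale bookkeeping in the MDA step---separating the $\Theta(1/t)$ corrections carried by $\tau_1$ from the $\ln t$ growth carried by $\tau_2$---and specifically certifying that every $\tau_2$-dependent term \emph{except} $\circled{B}$ is truly $o(1/t)$ so that it may be hidden in $\Theta(1/t)$; for LDA the delicate point is instead pinning down the \emph{exact} order (not merely an upper bound) of the $1/t$ correction, which is where strict monotonicity of $\phi$ is needed.
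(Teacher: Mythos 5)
Your proposal is correct and follows essentially the same route as the paper: rerun the Lemma~\ref{lem:lda_err}/\ref{lem:mda_err} computations with the decision thresholds determined by $q=1-1/t$ and the class weights by $p$ (which yields exactly the paper's intermediate identities \eqref{eq:lda_pq_bound_full}--\eqref{eq:mda_pq_bound_full}), then apply local linear approximations in $1/t$ and absorb the $\hat{\boldsymbol\mu}$ estimation error via Lemma~\ref{lem:mom_conc}. If anything, you are more careful than the paper's two-line argument, e.g.\ in verifying that the combined first-order LDA correction $\frac{1-p}{2}\bigl[\phi(\|\boldsymbol\mu\|/\sigma)-\phi(3\|\boldsymbol\mu\|/\sigma)\bigr]\cdot\tau/\sigma$ does not cancel and in checking that the $\ln t$-argument tails are $o(1/t)$.
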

\begin{proof}
    See Appendix~\ref{sec:train_test_diff_proof}.
\end{proof}
As we can see, the difference between the bounds \eqref{eq:lda_pq_bound} and \eqref{eq:mda_pq_bound} is mainly in the terms marked as $\circled{A}$ and $\circled{B}$. It is clear that $\circled{A}>\circled{B}$ as long as $t<\exp(8\|\boldsymbol{\mu}\|^2/\sigma^2)$. For example, when $\|\boldsymbol{\mu}\|=2$, and $\sigma=1$, we have $\exp(8\|\boldsymbol{\mu}\|^2/\sigma^2)\approx8\cdot10^{13}$. Thus, the gap between LDA error and the upper bound on MDA error remains feasible even for large values of $t$ (i.e. when $q$ is close to 1). 

\section{Experiments} \label{sec:experiments}
In this section, we empirically validate the predictions from our theory for synthetic data (generated from our model $\mathcal{D}$) as well as for real data, the distribution of which is not necessarily the same as $\mathcal{D}$, but shares its main characteristics, such as the presence of minority subpopulations in at least one of the classes.\footnote{The code for reproducing the experiments is at \url{https://github.com/armanbolatov/long_tail}. The random seeds we used are indicated in the code.}

\subsection{Synthetic Data}
First of all, we verify our error bounds from Lemmas \ref{lem:lda_err} and \ref{lem:mda_err} experimentally. To do this, we generate training and test sets from our data model $\mathcal{D}$, fit LDA and MDA to the training set, compute the misclassification errors on the test set, and compare with theoretical bounds \eqref{eq:lda_err_lb} and \eqref{eq:mda_err_ub} modulo the asymptotic terms $\widetilde{O}(\sqrt{d/n})$. Since the bounds depend on several parameters, we vary each of these parameters while keeping the others fixed. Unless otherwise specified, the default values of the parameters are: $d = 50$, $p = 0.9$, $\|\boldsymbol{\mu}\|= 2$, $\sigma = 1$, $n = 7000$. Test samples are of size $n_\text{test}=3000$. For each  variable parameter value, we generate 10 training and test samples and estimate the generalization errors with  95\% confidence intervals across test samples.

\paragraph{Dependence on $\|\boldsymbol\mu\|$.} To test the dependence of error bounds on $\|\boldsymbol\mu\|$, we vary $\|\boldsymbol\mu\|$ in the interval ${[2, 6]}$ with a step 0.08. For each value of $\|\boldsymbol{\mu}\|$, we take a random direction in $\mathbb{R}^d$, and place $\boldsymbol{\mu}$, $-\boldsymbol\mu$, and $3\boldsymbol\mu$ along that direction.   The results of the experiments are shown in Figure~\ref{fig:mu_norm}. 
\begin{figure*}
    \includegraphics[width=.3\textwidth]{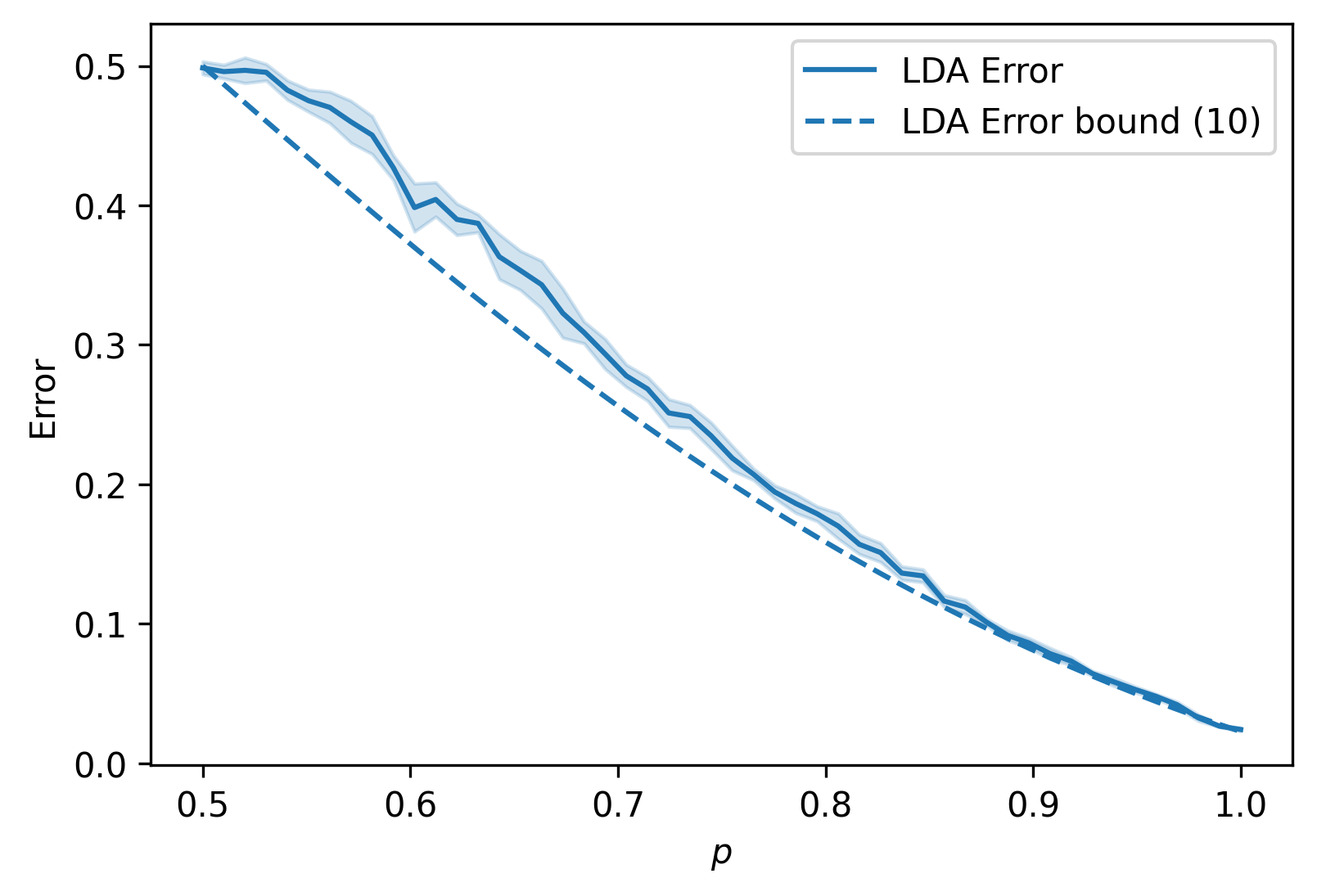}\hfill\includegraphics[width=.3\textwidth]{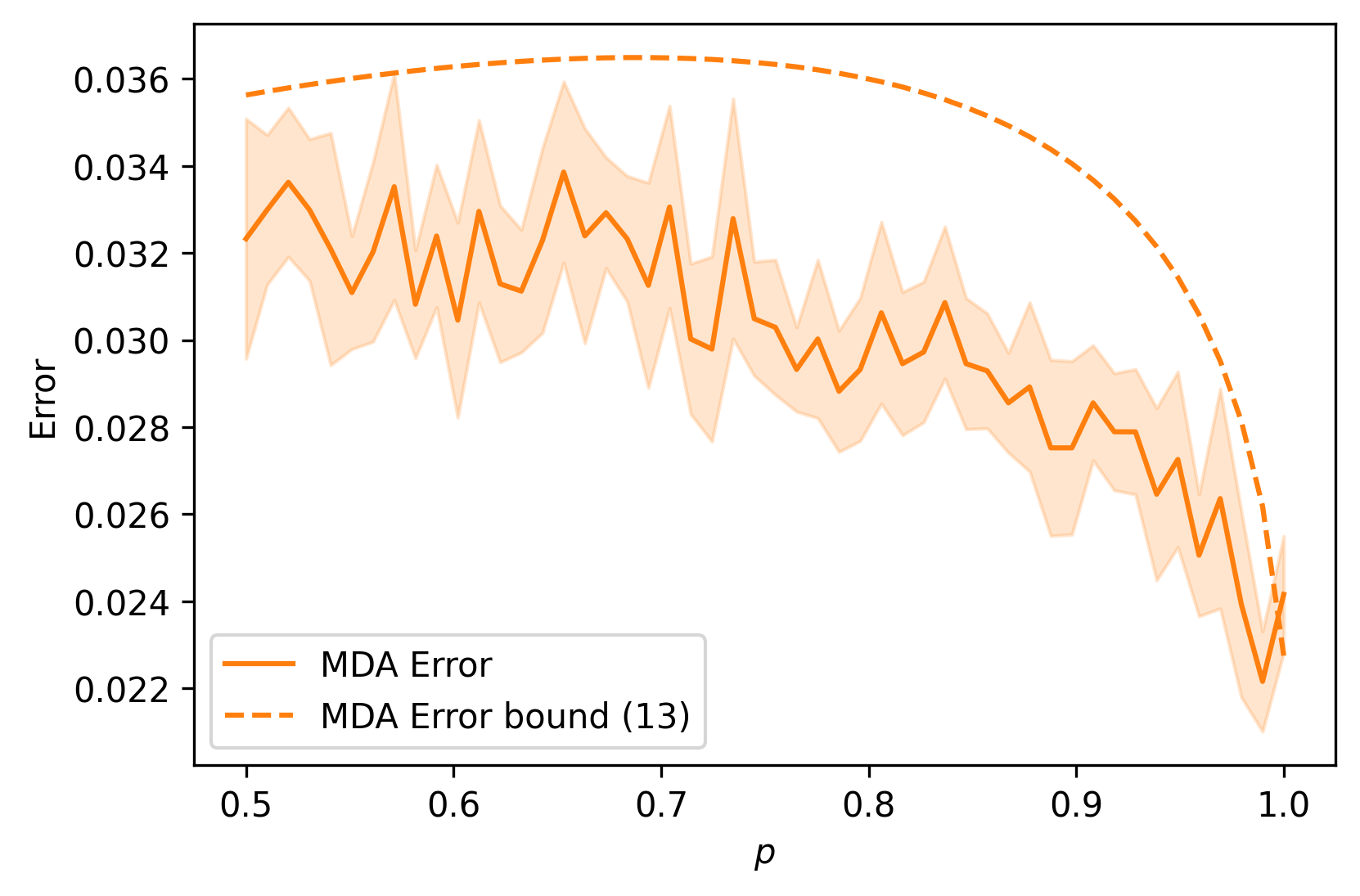}\hfill\includegraphics[width=.3\textwidth]{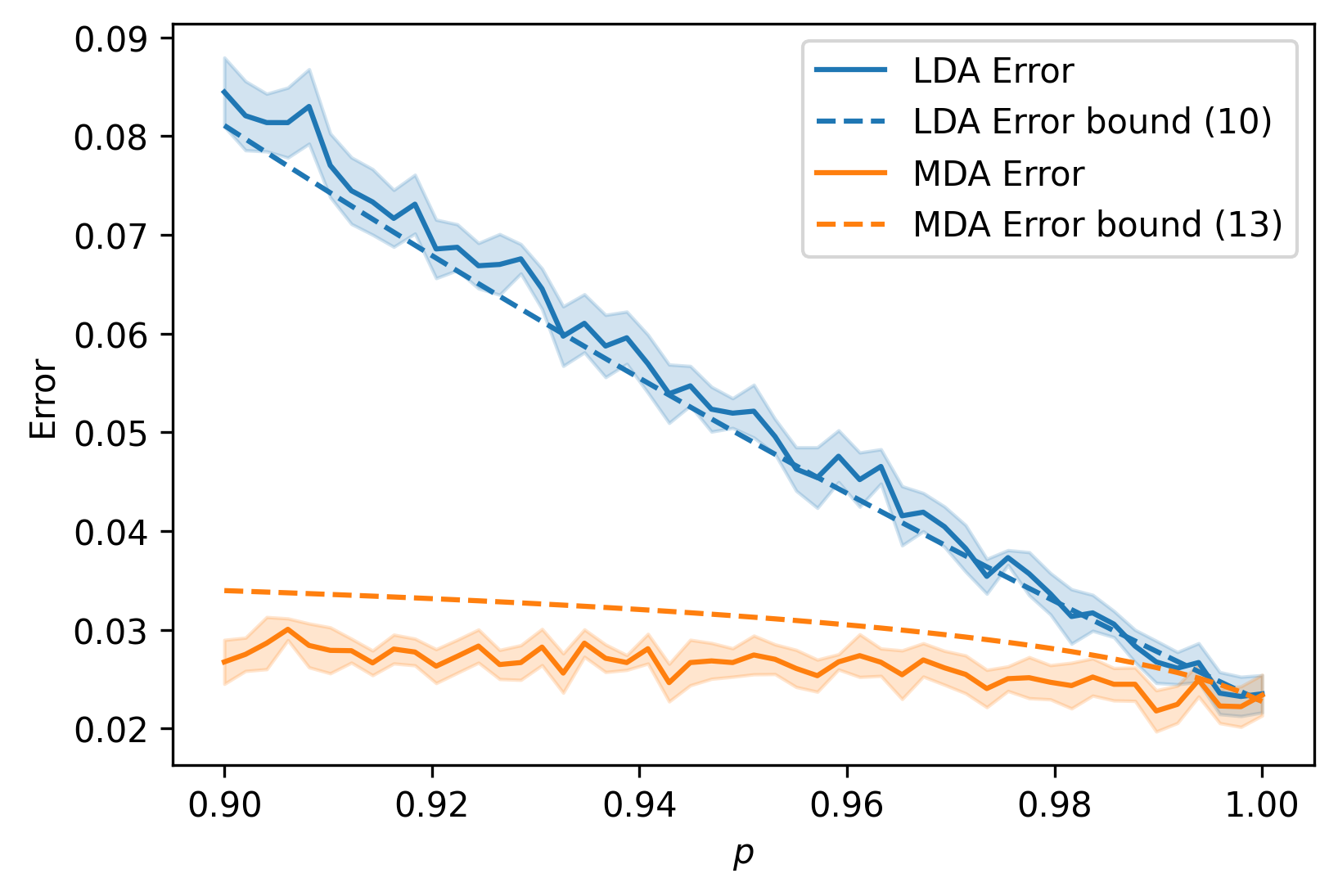}
    \caption{Comparison of empirical errors (solid) and theoretical error bounds (dashed) when $p$ varies. The shaded areas are 95\% confidence bands around the average across 10 runs. The values of the remaining parameters are fixed as follows: $d=50$,  $\|\boldsymbol{\mu}\|=2$, $\sigma=1$, $n=7000$, $n_\text{test}=3000$.}
    \label{fig:p}
\end{figure*}
\begin{figure*}
    \begin{minipage}[t]{.3\textwidth}
    \includegraphics[width=\textwidth]{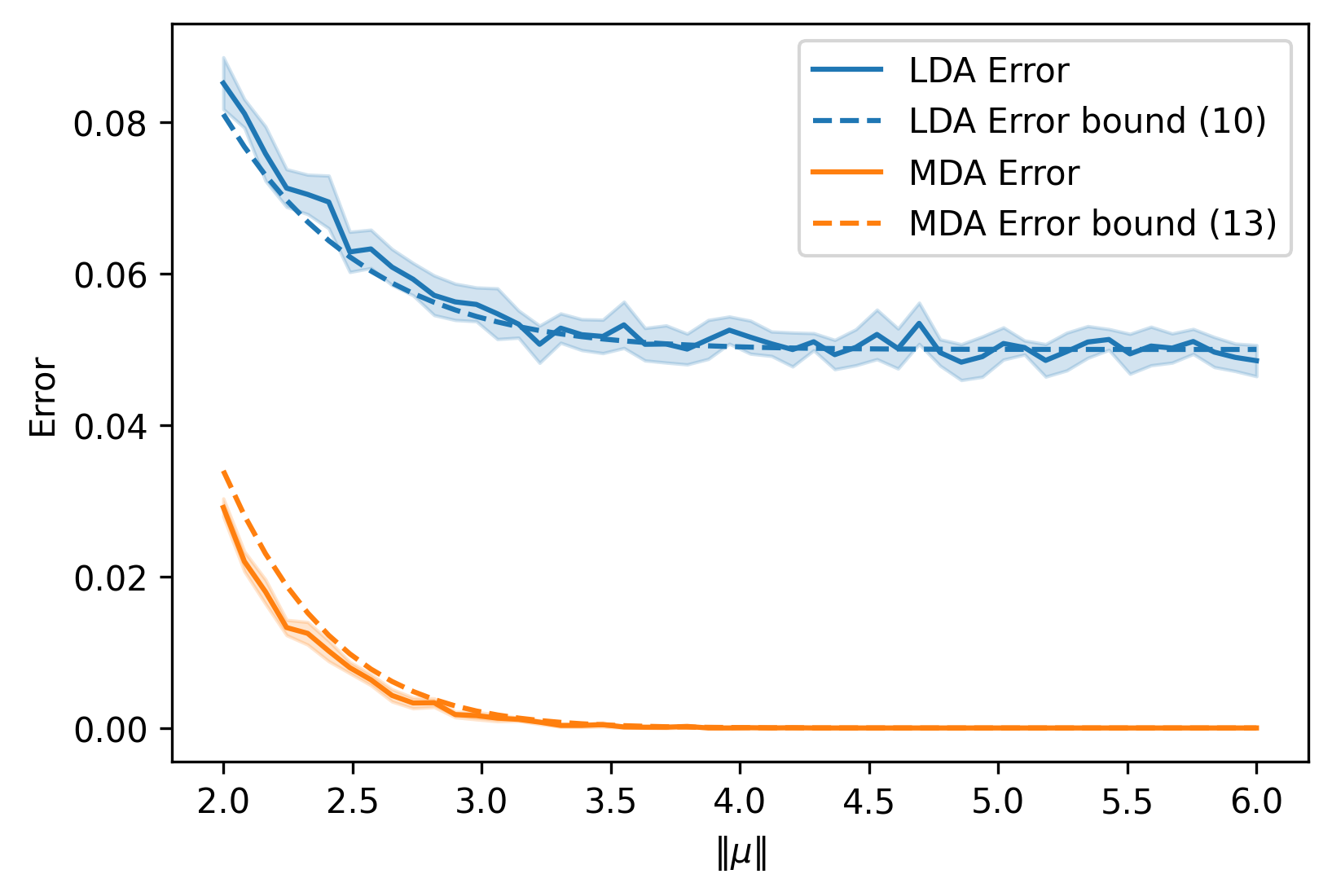}
    \caption{Comparison of empirical errors (solid) and theoretical error bounds (dashed) when $\|\boldsymbol{\mu}\|$ varies. The shaded areas are 95\% confidence bands around the average across 10 runs. The values of the remaining parameters are fixed as follows: $d=50$,  $p=0.9$, $\sigma=1$, $n=7000$, $n_\text{test}=3000$.}
    \label{fig:mu_norm}
    \end{minipage}\hfill\begin{minipage}[t]{.3\textwidth}
    \includegraphics[width=\textwidth]{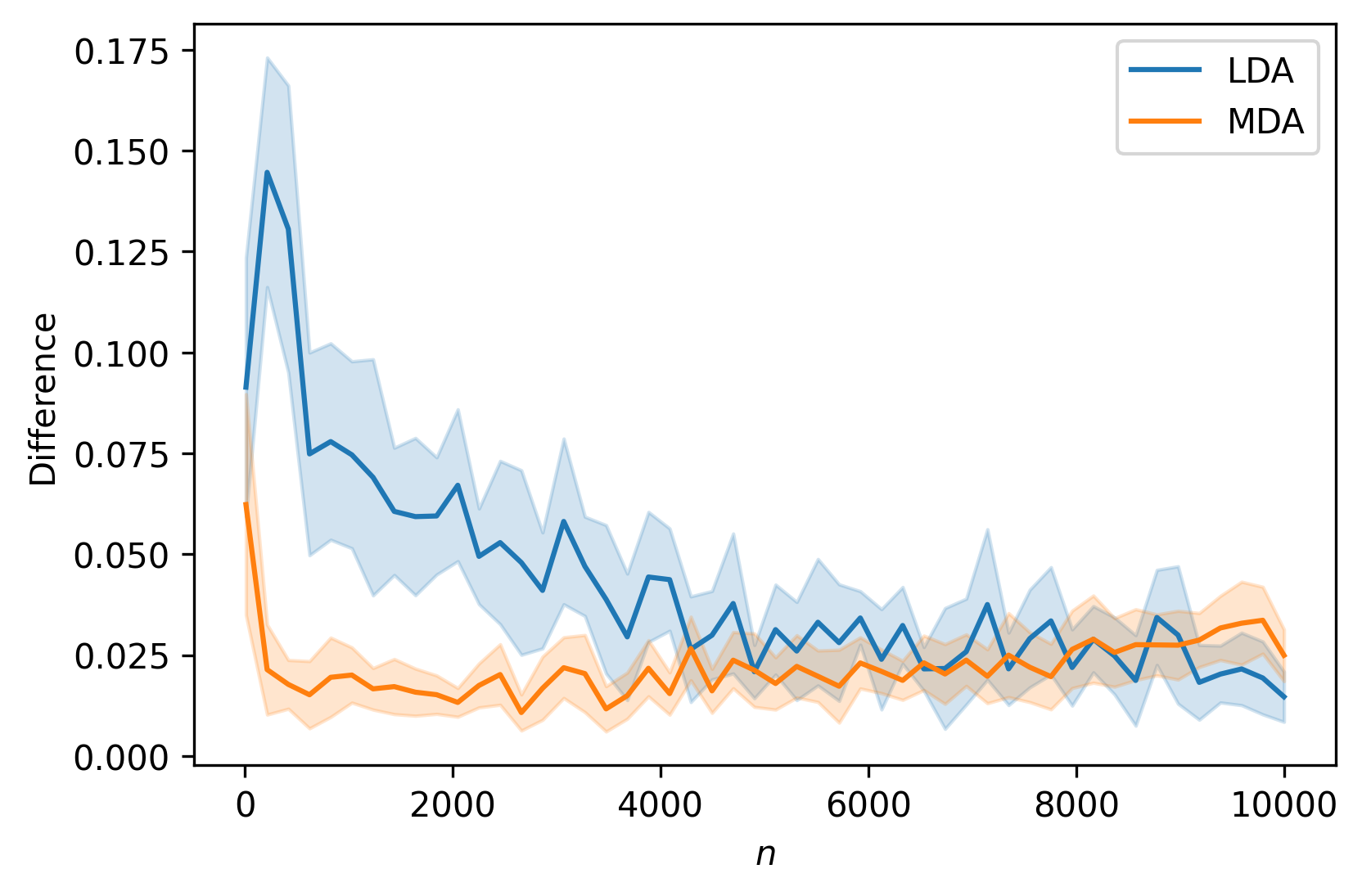}
    \caption{Verifying the order $\widetilde{O}\left(\sqrt{\frac{d}n}\right)$ of the estimation error by plotting $|\text{Test Error} - \text{Error Bound}|\cdot \sqrt{\frac{n}{d\ln n}}$ for increasing values of $n$. The values of the remaining parameters are fixed as follows: $d=50$, $\|\boldsymbol{\mu}\|=2$, $p=0.9$, $\sigma=1$, $n_\text{test}=3000$. The shaded areas are 95\% confidence bands around the average across 10 runs.}
    \label{fig:n}    
    \end{minipage}\hfill\begin{minipage}[t]{.3\textwidth}
    \includegraphics[width=\textwidth]{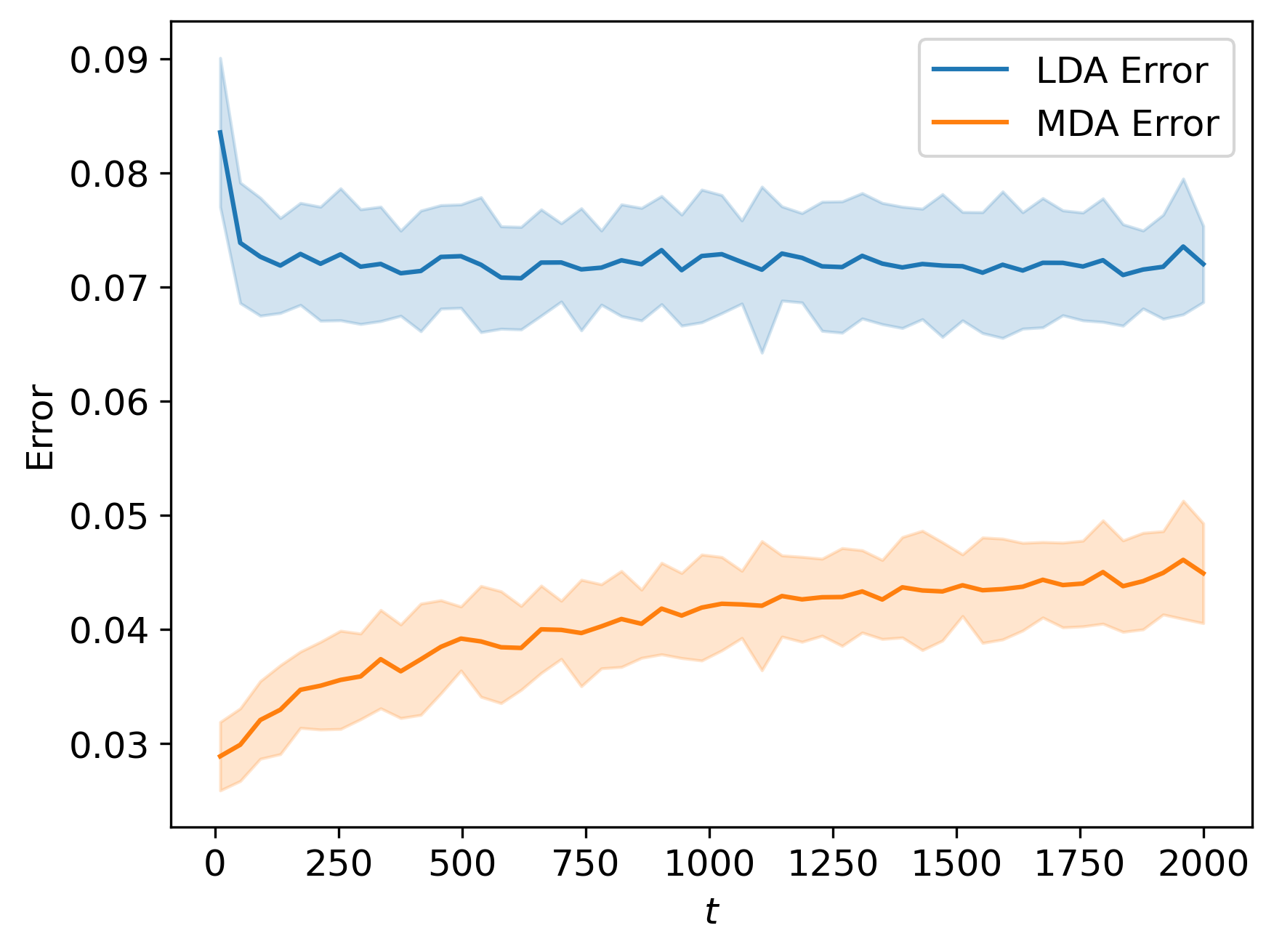}
    \caption{Comparison of the empirical LDA and MDA errors when training on $\mathcal{D}_{1-1/t}$ but testing on $\mathcal{D}_p$, as $t$ grows. The values of the remaining parameters are fixed as follows: $d=50$,  $p=0.9$, $\|\boldsymbol{\mu}\|=2$, $\sigma=1$, $n=10000$, $n_\text{test}=10000$. The shaded areas are 95\% confidence bands around the average across 20 runs.}
    \label{fig:p_q}    
    \end{minipage}
\end{figure*}
\begin{figure*}
    \begin{minipage}[t]{.3\textwidth}
        \includegraphics[width=\textwidth]{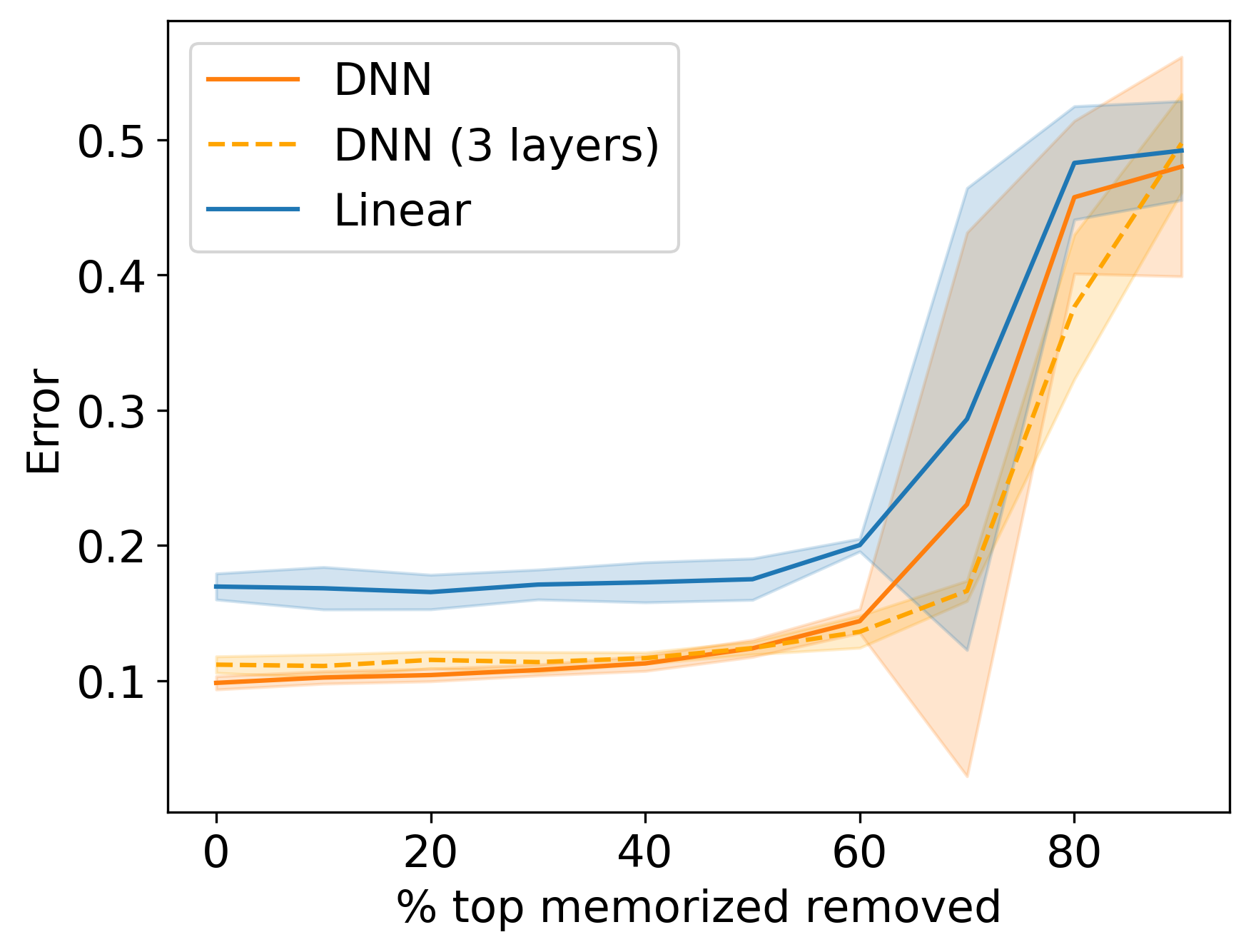}
        \caption{Evaluation of a linear classifier and deep neural networks (Distill-BERT) on a dataset of real movie reviews (SST-2). The shaded areas are 95\% confidence bands around the average across 9 runs per each \% top memorized examples removed.}
        \label{fig:sst2} 
    \end{minipage}\hfill\begin{minipage}[t]{.23\textwidth}
        \includegraphics[width=\textwidth]{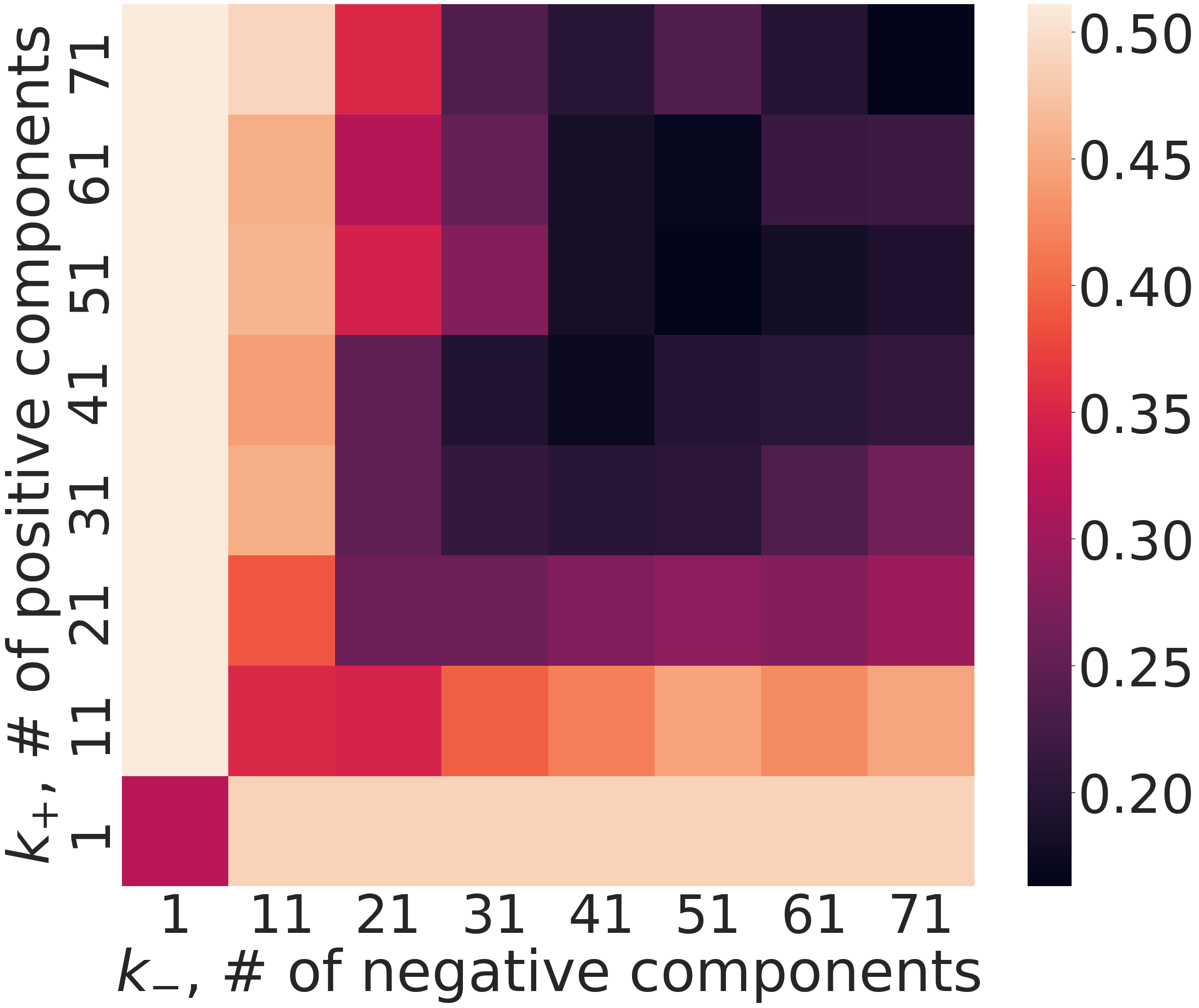}
        \caption{Heatmap of errors when fitting overparameterized MDA classifiers to data generated from our model $\mathcal{D}$, $d=50$, $\boldsymbol\mu=\frac{1}{\sqrt{2}}(1,\ldots,1)$, $n=300$, $\sigma=1$, $p=0.9$.}
        \label{fig:benign}
    \end{minipage}\hfill\begin{minipage}[t]{.39\textwidth}
        \includegraphics[width=.48\textwidth]{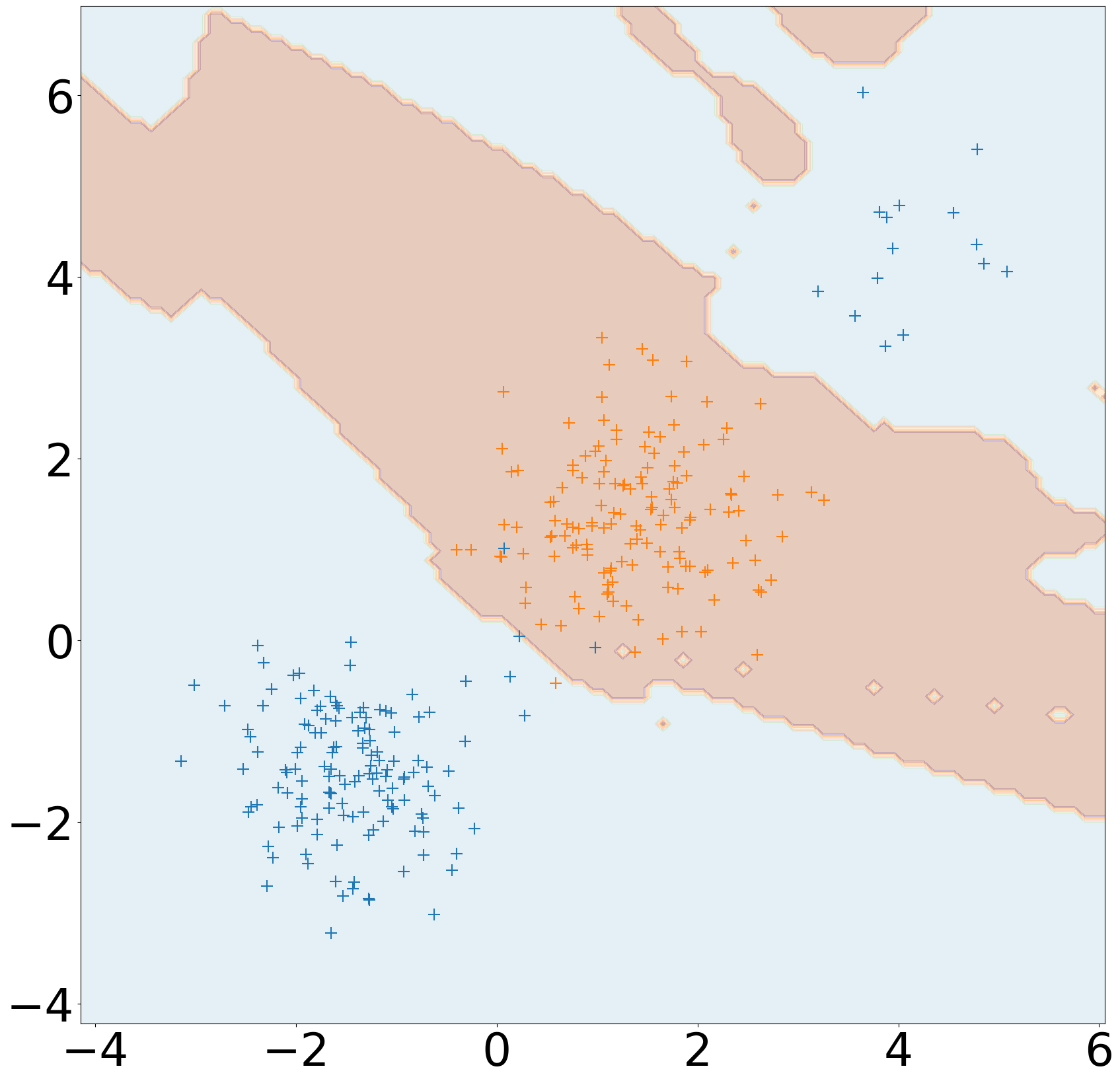}\hfill\includegraphics[width=.48\textwidth]{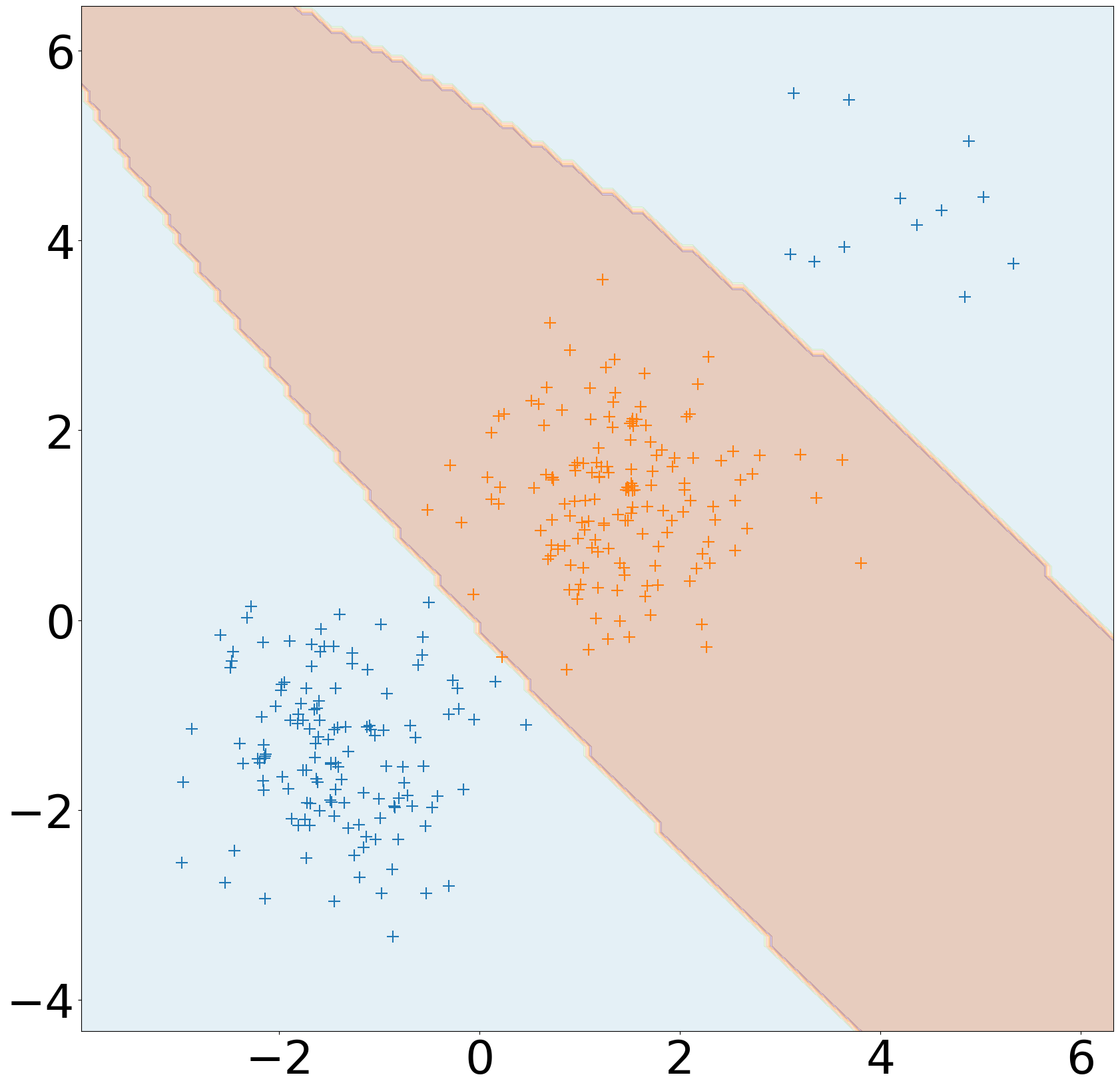}
        \vspace{10pt}
        \caption{Decision boundaries of overparameterized and properly parametrized MDA classifiers. Left: $k_+=k_-=30$, Right: $k_+=1, k_-=2$. The rest parameters are as follows:
        $\boldsymbol\mu=\frac{1}{\sqrt{2}}(1,1)$, $n=300$, $\sigma=1$, $p=0.9$.}
        \label{fig:db_2d}   
    \end{minipage}
\end{figure*}
As we can see, the empirical errors are generally consistent with our bounds. The LDA test error is statistically close to our LDA error bound \eqref{eq:lda_err_lb}, as the latter is mainly within the 95\% confidence band. Meanwhile, the MDA test error is significantly lower than our MDA error bound \eqref{eq:mda_err_ub}. This is not surprising because, as we already mentioned in Section~\ref{sec:classifiers}, for LDA we derived the exact misclassification error modulo $\widetilde{O}(\sqrt{d/n})$, while for MDA we got the \emph{upper} bound for the error.

\paragraph{Dependence on $p$.} To test the dependence of error bounds on $p$, we vary $p$ in the interval $[0.5, 1]$ with a step 0.01. The results are shown in Figure~\ref{fig:p}.  As we can see, the situation is similar to the previous one: for LDA, the empirical error agrees well with our formula \eqref{eq:lda_err_lb}, especially for $p$ closer to 1; while for MDA, in most cases, it is significantly below our bound \eqref{eq:mda_err_ub}. 

\paragraph{Dependence on $n$.}
To check the correctness of the order $\widetilde{O}(\sqrt{d/n})$ of the estimation error, we consider the expression
\begin{equation}
    |\text{Test Error} - \text{Error Bound}|\cdot \sqrt{\frac{n}{d\ln n}}\label{eq:expr_n}
\end{equation}
for increasing $n$. The results are shown in Figure~\ref{fig:n}. Here we observe the boundedness of the expression \eqref{eq:expr_n}, which confirms that the order $\widetilde{O}(\sqrt{d/n})$ of the estimation error is correct.

\paragraph{Training on $\mathcal{D}_q$, but Testing on $\mathcal{D}_p$.} Finally, for experimental verification of the conclusions from Theorem~\ref{thm:train_test_diff}, we generate training samples from $D_{1-1/t}$, and test samples from $D_p$. We vary $t$ in the interval $[10, 2000]$  and observe the behavior of the LDA and MDA test errors. The results are shown in Figure~\ref{fig:p_q}.
As we can see, the empirical error curves agree with the predictions of our Theorem~\ref{thm:train_test_diff}. Namely, the LDA error exceeds the MDA error, and the gap between the two remains feasible with the increase of $t$. 

\subsection{Real Data}
We conduct our experiments\footnote{Our computing infrastructure for these experiments is as follows. CPU: Intel Core i9-10900X CPU @ 3.70GHz, GPU: 2$\times$ Nvidia RTX 3090, RAM: 128 Gb, Operating System: Ubuntu 22.04.1 LTS, torch: 1.13.1, cuda: 11.7, pandas: 1.5.3.} on SST-2 \cite{socher-etal-2013-recursive}, which is a dataset for sentence-level binary (positive vs. negative) sentiment classification. It has 6920 training, 872 validation, and 1821 test examples. We use the pre-trained Distill-BERT model \cite{DBLP:journals/corr/abs-1910-01108} that consists of 6 transformer layers, where each layer is composed of 12 attention heads. We take the  representation of the \texttt{[CLS]} token from the 6$^\text{th}$ layer  for classification. We train the network with Adam, setting the learning rate and batch size to ${10}^{-6}$ and 100, respectively. 

Calculating the memorization score according to \eqref{eq:mem_score} requires retraining the network for every training example (that is being removed) which is not computationally feasible. Thus we approximate the memorization score using the method of Zheng and Jiang \cite{DBLP:conf/acl/ZhengJ22}.

We finetune two versions of the pre-trained Distill-BERT on SST-2: in one we freeze all layers except the top classification layer, in the second we finetune all layers. The first model---called Linear---is essentially a linear classifier (logistic regression), which receives text representations from the 6th layer of Distill-BERT as input, and the representations are not trained. It is clear that such a model is not capable of memorizing rare atypical examples. The second model---called DNN---is a full-fledged deep neural network that has enough capacity (66 million parameters) to memorize atypical examples from rare subpopulations. We also consider an intermediate option---called DNN (3 layers)---when the three lower layers are frozen, and the remaining layers are finetuned.

The experiment is as follows: (1) we compute the memorization score for each training example through DNN,\footnote{This means that in the definition of the memorization score \eqref{eq:mem_score}, we use a DNN trained by gradient-based method as a learning algorithm $A$. However, we \emph{approximate} Eq.~\ref{eq:mem_score} via the method of Zheng and Jiang \cite{DBLP:conf/acl/ZhengJ22} to avoid repeated retraining for each example.
} (2) remove $m$\% of the top memorized examples from the training set, (3) train the Linear and DNN models on such a set with hyperparameters tuned on the validation set, (4) and finally evaluate both models on the test set. The results of this experiment for different $m$ are shown in Figure~\ref{fig:sst2}. As we can see, the error of the Linear classifier is always greater than the DNN error. Further, when the tail is shortened (i.e., when a larger number of top memorized examples from training are discarded), the gap between the errors of Linear and DNN slightly decreases. This is consistent with the predictions of our theory, albeit built with simpler assumptions on the distribution of data and for more interpretable classifiers. 

At certain point, the difference between the errors is not statistically significant. This happens because at a high percentage of removal, examples are removed not only from the tail, but also from the main subpopulations, which sharply worsens the performance of both the Linear and DNN classifiers. Note that this regime is not considered in our theory.

A careful reader may notice that the DNN error is not close to zero, even when no examples are removed from the training set. This is because there are examples in the test set that the DNN cannot classify correctly, even though it fits the training set perfectly. In principle, such difficult examples can be simulated in our data-generating model by introducting label flipping noise, and we defer such modification to our future work.

\section{Discussion on Benign Overfitting}

As was already mentioned in the Introduction, Feldman's long-tail theory explains the \emph{need} for overfitting, but does not explain how exactly modern overparameterized learning algorithms manage to overfit without harming generalization \cite{DBLP:conf/iclr/ZhangBHRV17}. 
Despite the availability of the answer for some model classes  \cite{DBLP:journals/jmlr/ChatterjiL21,DBLP:conf/colt/Shamir22,DBLP:conf/colt/FreiCB22,cao2022benign,DBLP:conf/icassp/WangT21}, our main concern with the current trend in theoretical studies of benign overfitting is in the assumptions about the data generating process, namely that only one subpopulation is allowed in each class, and linear inseparability is achieved by introducing random label-flipping noise. We repeat once again that such a setup does not fit in with Feldman's long-tail theory, in which the presence of rare subpopulations in classes is a prerequisite. Without this condition, there is no need for overfitting. Therefore, we would like to draw the attention of the learning theory community to the existing gap between theoretical setups and reality regarding the distribution of data. 


In the theoretical part of our work, we do not deal with overparameterized models like deep neural networks. Our ``complex'' classifier (MDA) is actually only as complex as the data requires. Therefore, we cannot claim to have shown benign overfitting under the conditions of our data-generating model. We have shown the underfitting of the linear classifier (LDA) and the \emph{proper} fitting of the MDA classifier with the right number of components.

However, we are curious about what  happens if we give the MDA classifier the ability to fit many more Gaussians than necessary. To do this, we conduct the following experiment. The data is generated from the same model $\mathcal{D}$  that we used earlier (we fixed $d=50$, $\boldsymbol\mu=\frac{2}{\sqrt{d}}(1,\ldots,1)$, $n=300$, $\sigma=1$, $p=0.9$). For data points from the positive class, we fit $k_+$ Gaussians, and for points from the negative class, we fit $k_-$ Gaussians. Since in this case, we have no simple way to estimate the parameters by the method of moments, all parameters are estimated by the approximate maximum likelihood method through the EM algorithm. Let ${f}_+(\mathbf{x})$ and ${f}_-(\mathbf{x})$ be the resulting estimated p.d.f.'s for the positive and negative classes, respectively. Then the MDA classifier can be written as 
$$
h^\text{MDA}(\mathbf{x};k_+,k_-)=\begin{cases}+1,\quad&\text{if }f_+(\mathbf{x})\ge f_-(\mathbf{x})\\
-1&\text{otherwise}
\end{cases}
$$
We vary $k_+$ and $k_-$ in the interval $[1,71]$ with a step $10$ and calculate the classifier error on the test sample. The results of this experiment are shown in Figure~\ref{fig:benign}, which is a heatmap of test errors for different pairs $(k_+, k_-)$. The training error is zero for all pairs $(k_+, k_-)$, except for $k_+=k_-=1$.
Notably, there is a clear pattern: when $k_+$ and $ k_-$ are close to each other, the performance can be better than when there is a heavy imbalance between $k_+$ and $k_-$.

To understand how an overparametrized MDA classifier manages to overfit benignly, we plot a decision curve for the case $d=2$, $k_+=k_-=30$ (Figure~\ref{fig:db_2d}, left). As we can see, despite the potential to overfit malignantly with a complex decision curve, the EM algorithm chooses a fairly simple classifier that is not so different from the optimal one (Figure~\ref{fig:db_2d}, right), that uses $k_+=1$, $k_-=2$.

It is noteworthy that an overparameterized MDA classifier is able to overfit beningly on data generated from our model, because such a framework is more interpretable and amenable to analysis than overparameterized deep neural networks trained on real data. Accordingly, it becomes possible to study the phenomenon of benign overfitting in a simplified setting without linking it to deep learning, in which it is usually considered.

\section{Conclusion}
In this work we have focused on building an \emph{interpretable} mathematical framework for the analysis of learning algorithms capable of memorizing rare/atypical examples that usually occur in natural data, such as texts and images. The key point in our work is the data-generating model based on Gaussian mixtures, which demonstrates the inability of a simple classifier without sufficient memory to correctly label rare and atypical test examples. At the same time, for a more complex (but not too complex) classifier with sufficient memory, the near-to-optimal generalization ability is shown. Moreover, the dynamics of the performance of these classifiers with tail shortening has been studied both theoretically and experimentally, and the experiments were carried out both on synthetic and real data. 

The last but not least property of our framework is that it allows for benign overfitting, and this is what we plan to study in the near future.
In this regard, it will be interesting to analyze the behavior of overparameterized learning algorithms (such as MDA with a redundant number of components, deep neural networks, and nearest-neighbor classifiers) on data generated from our model. This will require obtaining new results in terms of sufficient conditions for benign overfitting to happen under the assumptions of our model.


\ack This research has been funded by Nazarbayev University under Faculty-development competitive research grants program for 2023-2025 Grant \#20122022FD4131, PI R. Takhanov. Igor Melnykov's work on this project was supported by a Fulbright US Scholar Grant administered by the US Department of State Bureau of Educational and Cultural Affairs (grant ID: PS00334837). The authors would like to thank Christopher Dance for a thorough review of our work (including mathematical proofs), Matthias Gall\'e for his constructive feedback, including the suggestion to add a discussion on benign overfitting. We would like to thank the reviewers for their valuable feedback, in particular Reviewer 1 for a deep reading of our work, Reviewers 2 and 3 for carefully reading our response, Reviewer 6 for good questions that helped improve the presentation of the material.

\bibliography{ref}

\begin{thebibliography}{10}

\bibitem{bartlett2021deep}
Peter~L Bartlett, Andrea Montanari, and Alexander Rakhlin, `Deep learning: a
  statistical viewpoint', {\em Acta Numerica}, {\bf 30},  87--201, (2021).

\bibitem{belkin2021fit}
Mikhail Belkin, `Fit without fear: remarkable mathematical phenomena of deep
  learning through the prism of interpolation', {\em Acta Numerica}, {\bf 30},
  203--248, (2021).

\bibitem{cao2022benign}
Yuan Cao, Zixiang Chen, Misha Belkin, and Quanquan Gu, `Benign overfitting in
  two-layer convolutional neural networks', in {\em NeurIPS}, (2022).

\bibitem{DBLP:journals/jmlr/ChatterjiL21}
Niladri~S. Chatterji and Philip~M. Long, `Finite-sample analysis of
  interpolating linear classifiers in the overparameterized regime', {\em J.
  Mach. Learn. Res.}, {\bf 22},  129:1--129:30, (2021).

\bibitem{DBLP:books/daglib/0023376}
Thomas~H. Cormen, Charles~E. Leiserson, Ronald~L. Rivest, and Clifford Stein,
  {\em Introduction to Algorithms, 3rd Edition}, {MIT} Press, 2009.

\bibitem{10.2307/Dempster}
A.~P. Dempster, N.~M. Laird, and D.~B. Rubin, `Maximum likelihood from
  incomplete data via the em algorithm', {\em Journal of the Royal Statistical
  Society. Series B (Methodological)}, {\bf 39}(1),  1--38, (1977).

\bibitem{DBLP:journals/spm/Deng12}
Li~Deng, `The {MNIST} database of handwritten digit images for machine learning
  research [best of the web]', {\em {IEEE} Signal Process. Mag.}, {\bf 29}(6),
  141--142, (2012).

\bibitem{DBLP:conf/stoc/Feldman20}
Vitaly Feldman, `Does learning require memorization? a short tale about a long
  tail', in {\em Proccedings of the 52nd Annual {ACM} {SIGACT} Symposium on
  Theory of Computing, {STOC} 2020, Chicago, IL, USA, June 22-26, 2020}, eds.,
  Konstantin Makarychev, Yury Makarychev, Madhur Tulsiani, Gautam Kamath, and
  Julia Chuzhoy, pp. 954--959. {ACM}, (2020).

\bibitem{DBLP:conf/nips/FeldmanZ20}
Vitaly Feldman and Chiyuan Zhang, `What neural networks memorize and why:
  Discovering the long tail via influence estimation', in {\em Advances in
  Neural Information Processing Systems 33: Annual Conference on Neural
  Information Processing Systems 2020, NeurIPS 2020, December 6-12, 2020,
  virtual}, eds., Hugo Larochelle, Marc'Aurelio Ranzato, Raia Hadsell,
  Maria{-}Florina Balcan, and Hsuan{-}Tien Lin, (2020).

\bibitem{https://doi.org/10.1111/j.1469-1809.1936.tb02137.x}
R.~A. Fisher, `The use of multiple measurements in taxonomic problems', {\em
  Annals of Eugenics}, {\bf 7}(2),  179--188, (1936).

\bibitem{DBLP:conf/colt/FreiCB22}
Spencer Frei, Niladri~S. Chatterji, and Peter~L. Bartlett, `Benign overfitting
  without linearity: Neural network classifiers trained by gradient descent for
  noisy linear data', in {\em Conference on Learning Theory, 2-5 July 2022,
  London, {UK}}, eds., Po{-}Ling Loh and Maxim Raginsky, volume 178 of {\em
  Proceedings of Machine Learning Research}, pp. 2668--2703. {PMLR}, (2022).

\bibitem{https://doi.org/10.1111/j.2517-6161.1996.tb02073.x}
Trevor Hastie and Robert Tibshirani, `Discriminant analysis by gaussian
  mixtures', {\em Journal of the Royal Statistical Society: Series B
  (Methodological)}, {\bf 58}(1),  155--176, (1996).

\bibitem{DBLP:books/lib/HastieTF09}
Trevor Hastie, Robert Tibshirani, and Jerome~H. Friedman, {\em The Elements of
  Statistical Learning: Data Mining, Inference, and Prediction, 2nd Edition},
  Springer Series in Statistics, Springer, 2009.

\bibitem{DBLP:journals/corr/abs-1902-03736}
Chi Jin, Praneeth Netrapalli, Rong Ge, Sham~M. Kakade, and Michael~I. Jordan,
  `A short note on concentration inequalities for random vectors with
  subgaussian norm', {\em CoRR}, {\bf abs/1902.03736}, (2019).

\bibitem{DBLP:journals/corr/abs-1910-01108}
Victor Sanh, Lysandre Debut, Julien Chaumond, and Thomas Wolf, `Distilbert, a
  distilled version of {BERT:} smaller, faster, cheaper and lighter', {\em
  CoRR}, {\bf abs/1910.01108}, (2019).

\bibitem{DBLP:conf/colt/Shamir22}
Ohad Shamir, `The implicit bias of benign overfitting', in {\em Conference on
  Learning Theory, 2-5 July 2022, London, {UK}}, eds., Po{-}Ling Loh and Maxim
  Raginsky, volume 178 of {\em Proceedings of Machine Learning Research}, pp.
  448--478. {PMLR}, (2022).

\bibitem{socher-etal-2013-recursive}
Richard Socher, Alex Perelygin, Jean Wu, Jason Chuang, Christopher~D. Manning,
  Andrew Ng, and Christopher Potts, `Recursive deep models for semantic
  compositionality over a sentiment treebank', in {\em Proceedings of the 2013
  Conference on Empirical Methods in Natural Language Processing}, pp.
  1631--1642, Seattle, Washington, USA, (October 2013). Association for
  Computational Linguistics.

\bibitem{DBLP:journals/jmlr/ToshD17}
Christopher Tosh and Sanjoy Dasgupta, `Maximum likelihood estimation for
  mixtures of spherical gaussians is np-hard', {\em J. Mach. Learn. Res.}, {\bf
  18},  175:1--175:11, (2017).

\bibitem{valiant1984theory}
Leslie~G Valiant, `A theory of the learnable', {\em Communications of the ACM},
  {\bf 27}(11),  1134--1142, (1984).

\bibitem{vapnik1974theory}
Vladimir Vapnik and Alexey Chervonenkis.
\newblock Theory of pattern recognition, 1974.

\bibitem{DBLP:conf/icassp/WangT21}
Ke~Wang and Christos Thrampoulidis, `Benign overfitting in binary
  classification of gaussian mixtures', in {\em {IEEE} International Conference
  on Acoustics, Speech and Signal Processing, {ICASSP} 2021, Toronto, ON,
  Canada, June 6-11, 2021}, pp. 4030--4034. {IEEE}, (2021).

\bibitem{DBLP:conf/iclr/ZhangBHRV17}
Chiyuan Zhang, Samy Bengio, Moritz Hardt, Benjamin Recht, and Oriol Vinyals,
  `Understanding deep learning requires rethinking generalization', in {\em 5th
  International Conference on Learning Representations, {ICLR} 2017, Toulon,
  France, April 24-26, 2017, Conference Track Proceedings}. OpenReview.net,
  (2017).

\bibitem{DBLP:conf/acl/ZhengJ22}
Xiaosen Zheng and Jing Jiang, `An empirical study of memorization in {NLP}', in
  {\em Proceedings of the 60th Annual Meeting of the Association for
  Computational Linguistics (Volume 1: Long Papers), {ACL} 2022, Dublin,
  Ireland, May 22-27, 2022}, eds., Smaranda Muresan, Preslav Nakov, and Aline
  Villavicencio, pp. 6265--6278. Association for Computational Linguistics,
  (2022).

\end{thebibliography}

\clearpage
\onecolumn

\appendix

\section{Proofs}

\subsection{Estimating $\boldsymbol\mu$ by the method of moments}\label{sec:mu_mom}
\begin{lem}
Let $\{(\mathbf{X}_i,Y_i)\}_{i=1}^n$ be a sample of size $n$ from the distribution $\mathcal{D}$ specified in \eqref{eq:prior}--\eqref{eq:neg_c2}. Then estimating $\boldsymbol\mu$ by the method of moments gives
\begin{equation}
    \hat{\boldsymbol\mu}=\frac{1}{2n(1-p)}\sum_{i=1}^n \mathbf{X}_i.\label{eq:mu_hat}
\end{equation}
\end{lem}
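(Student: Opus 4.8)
The plan is to apply the method of moments with $\boldsymbol\mu$ as the only unknown. Since $p$ and $\sigma^2$ are given, I only need to match the first moment: the population mean $\E_\mathcal{D}[\mathbf{X}]$ is a linear function of the unknown $\boldsymbol\mu$, and equating it to the empirical mean $\bar{\mathbf{X}}:=\frac1n\sum_{i=1}^n\mathbf{X}_i$ yields a single vector equation that I can solve for $\boldsymbol\mu$. Because $\boldsymbol\mu\in\mathbb{R}^d$ and the first moment supplies exactly $d$ scalar equations, the parameter is exactly determined, so there is no need to invoke higher moments.

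First I would compute $\E_\mathcal{D}[\mathbf{X}]$ by the law of total expectation, conditioning on the label $Y$ and, within the negative class, on the latent component $K$. Using \eqref{eq:pos_class} I get $\E[\mathbf{X}\mid Y=+1]=\boldsymbol\mu$, and using \eqref{eq:neg_class1}--\eqref{eq:neg_c2} together with $\Pr[K=1\mid Y=-1]=p$ and $\Pr[K=2\mid Y=-1]=1-p$ I get
\begin{equation*}
\E[\mathbf{X}\mid Y=-1]=p\,(-\boldsymbol\mu)+(1-p)\,(3\boldsymbol\mu)=(3-4p)\boldsymbol\mu.
\end{equation*}
Combining the two cases with the Rademacher prior \eqref{eq:prior} then gives
\begin{equation*}
\E_\mathcal{D}[\mathbf{X}]=\tfrac12\boldsymbol\mu+\tfrac12(3-4p)\boldsymbol\mu=2(1-p)\boldsymbol\mu.
\end{equation*}

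Finally I would equate the population and empirical first moments, $2(1-p)\hat{\boldsymbol\mu}=\bar{\mathbf{X}}=\frac1n\sum_{i=1}^n\mathbf{X}_i$, and solve for $\hat{\boldsymbol\mu}$. Since the model is assumed to satisfy $p<1$, the scalar $2(1-p)$ is nonzero, so dividing through is legitimate and produces exactly \eqref{eq:mu_hat}. There is no substantial obstacle here; the only point worth flagging is that the method-of-moments estimator is well defined precisely because $p$ is known and $p\neq1$, so that the linear coefficient relating $\E_\mathcal{D}[\mathbf{X}]$ to $\boldsymbol\mu$ can be inverted. (As a sanity check, the factor $2(1-p)$ reflects that the population mean shrinks toward the origin as $p\to1$, which is why the estimator rescales the sample mean by its reciprocal.)
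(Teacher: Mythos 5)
Your proposal is correct and follows essentially the same route as the paper: both compute the population first moment by the law of total expectation (conditioning on $Y$ and, within the negative class, on $K$), obtain $\E_\mathcal{D}[\mathbf{X}]=2(1-p)\boldsymbol\mu$, and solve the resulting linear equation after substituting the empirical mean. The only cosmetic difference is that the paper works with $\E\left[\sum_{i=1}^n\mathbf{X}_i\right]$ rather than the per-observation mean, which changes nothing.
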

\begin{proof}
\begin{align*}
\E\left[\sum_{i=1}^n \mathbf{X}_i\right]&=\E\left[\sum_{i=1}^n \mathbf{X}_i\cdot\mathbb{I}[Y_i=+1]\right]+\E\left[\sum_{i=1}^n \mathbf{X}_i\cdot\mathbb{I}[Y_i=-1,K=1]\right]+\E\left[\sum_{i=1}^n \mathbf{X}_i\cdot\mathbb{I}[Y_i=-1,K=2]\right]\\
&=\sum_{i=1}^n\left[\E[\mathbf{X}_i\mid Y_i=+1]\cdot\frac12+\E[\mathbf{X}_i\mid Y_i=-1,K=1]\cdot\frac{p}{2}+\E[\mathbf{X}_i\mid Y_i=-1,K=2]\cdot\frac{1-p}{2}\right]\\
&=\frac12\sum_{i=1}^n[\boldsymbol\mu+(-\boldsymbol\mu)p+(3\boldsymbol\mu)(1-p)]=\frac{n}{2}(1-p)(4\boldsymbol\mu).
\end{align*}
Solving for $\boldsymbol\mu$ and replacing $\E\left[\sum_{i=1}^n\mathbf{X}_i\right]$ by $\sum_{i=1}^n\mathbf{X}_i$ concludes the proof.
\end{proof}

\begin{lem}\label{lem:mom_conc}
Let $\hat{\boldsymbol\mu}$ be the method of moments estimator of $\boldsymbol\mu$ given by \eqref{eq:mu_hat}.   Then
\begin{equation}
\Pr_{S\sim\mathcal{D}^n}[\|\hat{\boldsymbol\mu}-\boldsymbol\mu\|>t]\le e^{-\Omega\left(\frac{n t^2}{d}\right)},\label{eq:mom_conc}
\end{equation}
\end{lem}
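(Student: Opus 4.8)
The plan is to exploit that $\hat{\boldsymbol\mu}$ is an unbiased, \emph{linear} statistic of the i.i.d. sample and to reduce the high-dimensional deviation $\|\hat{\boldsymbol\mu}-\boldsymbol\mu\|$ to a union of one-dimensional sub-Gaussian tail bounds. By the previous lemma, $\E[\mathbf{X}_i]=2(1-p)\boldsymbol\mu$, so I would first rewrite the error as a centered average,
\[
\hat{\boldsymbol\mu}-\boldsymbol\mu=\frac{1}{2n(1-p)}\sum_{i=1}^n\big(\mathbf{X}_i-\E[\mathbf{X}_i]\big)=\frac{1}{2n(1-p)}\sum_{i=1}^n\mathbf{Z}_i,
\]
where the $\mathbf{Z}_i:=\mathbf{X}_i-\E[\mathbf{X}_i]$ are i.i.d. and mean-zero.

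Next I would establish marginal sub-Gaussianity of each coordinate of $\mathbf{Z}_i$. Conditional on its latent class/component, $\mathbf{X}_i=\boldsymbol\nu_i+\sigma\mathbf{g}_i$ with a random center $\boldsymbol\nu_i\in\{\boldsymbol\mu,-\boldsymbol\mu,3\boldsymbol\mu\}$ and $\mathbf{g}_i\sim\mathcal N(0,\mathbf I)$ independent of $\boldsymbol\nu_i$. Hence the $j$-th coordinate $Z_{ij}=(\nu_{ij}-\E[\nu_{ij}])+\sigma g_{ij}$ splits into a bounded centered part, with $|\nu_{ij}-\E[\nu_{ij}]|\le 4\|\boldsymbol\mu\|$ so that Hoeffding's lemma gives sub-Gaussian proxy $O(\|\boldsymbol\mu\|^2)$, plus an independent $\mathcal N(0,\sigma^2)$ part of proxy $\sigma^2$. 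Since sub-Gaussian proxies add under independence, each $Z_{ij}$ is sub-Gaussian with proxy $K^2=O(\|\boldsymbol\mu\|^2+\sigma^2)$, a constant in the regime of interest.

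Then I would bound a single coordinate of $\hat{\boldsymbol\mu}$ and union over coordinates. For fixed $j$, $\hat\mu_j-\mu_j=\frac{1}{2n(1-p)}\sum_i Z_{ij}$ is a scaled sum of $n$ independent proxy-$K^2$ sub-Gaussians, so the standard scalar tail bound gives
\[
\Pr\!\big[\,|\hat\mu_j-\mu_j|>s\,\big]\le 2\exp\!\Big(-\tfrac{2n(1-p)^2 s^2}{K^2}\Big).
\]
Because $\|\hat{\boldsymbol\mu}-\boldsymbol\mu\|>t$ forces $|\hat\mu_j-\mu_j|>t/\sqrt d$ for at least one $j$, a union bound over the $d$ coordinates at level $s=t/\sqrt d$ yields $\Pr[\|\hat{\boldsymbol\mu}-\boldsymbol\mu\|>t]\le 2d\exp\!\big(-\tfrac{2n(1-p)^2 t^2}{K^2 d}\big)$. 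Absorbing the polynomial prefactor $2d$ into the exponent—legitimate once $t=\Omega(\sqrt{d\ln d/n})$, which is the only regime in which \eqref{eq:mom_conc} is informative—and treating $(1-p)$, $\|\boldsymbol\mu\|$, $\sigma$ as $\Theta(1)$ produces the claimed $e^{-\Omega(nt^2/d)}$.

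The main obstacle to keep in mind is that the coordinates of $\mathbf{Z}_i$ are \emph{not} independent: every center $\boldsymbol\nu_i$ is a scalar multiple of the \emph{same} $\boldsymbol\mu$, so the coordinates are strongly coupled. This is exactly why I avoid any tensorization/product argument on the vector and instead control $\|\hat{\boldsymbol\mu}-\boldsymbol\mu\|$ through the coordinatewise union bound, for which only the marginal sub-Gaussianity of each $Z_{ij}$ is required. The one genuinely lossy step is the equal split of the deviation across coordinates at threshold $t/\sqrt d$: squaring this threshold is what turns the exponent $nt^2$ into $nt^2/d$ and makes the $d$-fold union bound affordable, and it is precisely the origin of the $1/d$ factor in \eqref{eq:mom_conc}. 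An $\varepsilon$-net bound over the sphere $S^{d-1}$ would give a comparable estimate, but the coordinatewise route is the cleanest match to the stated form.
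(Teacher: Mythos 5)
Your proof is correct, but it follows a genuinely different route from the paper's. The paper computes the moment-generating function of $\hat{\boldsymbol\mu}-\boldsymbol\mu$ exactly as the $n$-th power of the mixture m.g.f., Taylor-expands it for large $n$ to show $\E\bigl[e^{\mathbf{t}^\top(\hat{\boldsymbol\mu}-\boldsymbol\mu)}\bigr]\le\exp\bigl(O\bigl(\|\mathbf{t}\|^2(\sigma^2+\|\boldsymbol\mu\|^2)/((1-p)^2n)\bigr)\bigr)$, and then invokes an external result on norm-subGaussian random vectors to convert the vector sub-Gaussianity with proxy $\Theta(1/n)$ into the norm tail bound with proxy $\Theta(d/n)$. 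You instead decompose each observation into a bounded random center plus independent Gaussian noise, get marginal sub-Gaussianity of each coordinate via Hoeffding's lemma and additivity of proxies, and finish with a coordinatewise union bound at threshold $t/\sqrt d$. Both arguments pay the same factor of $d$ in the exponent (yours from splitting $t$ across coordinates, the paper's from the subGaussian-to-norm-subGaussian conversion), and yours is more elementary and self-contained, whereas the paper's avoids any polynomial prefactor and sidesteps the Taylor-expansion validity issue only by assuming $n$ large. The one caveat in your version is the $2d$ prefactor: absorbing it into the exponent requires $t=\Omega(\sqrt{d\ln d/n})$, which is logarithmically stronger than the regime $t=\Omega(\sqrt{d/n})$ in which \eqref{eq:mom_conc} first becomes nontrivial; since the lemma is only ever applied at $t=\sqrt{d\ln n/n}$ (and implicitly $d\le\mathrm{poly}(n)$), this costs nothing in context, but strictly speaking your bound does not cover the narrow window $\sqrt{d/n}\lesssim t\lesssim\sqrt{d\ln d/n}$, and it is worth flagging that restriction explicitly rather than calling it the ``only informative regime.''
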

\begin{proof}
    Let $(\mathbf{X},Y,K)\sim\mathcal{D}$. The moment-generating function (m.g.f.) of $\mathbf{X}$ is
    \begin{equation}
        M_{\mathbf{X}}(\mathbf{t})=\E\left[e^{\mathbf{t}^\top\mathbf{X}}\right]=\E\left[\E\left[e^{\mathbf{t}^\top\mathbf{X}}\bigm| Y,K\right]\right].\label{eq:mgf_cond}
    \end{equation}
    Recall, that the m.g.f. of a multivariate Gaussian $\mathcal{N}(\boldsymbol\mu,\mathbf{A})$ is $e^{\mathbf{t}^\top\boldsymbol\mu+\frac12\mathbf{t}^\top\mathbf{At}}$. Thus,
    \begin{align}
        &\E\left[e^{\mathbf{t}^\top\mathbf{X}}\bigm|Y=1\right]=e^{\mathbf{t}^\top\boldsymbol\mu+\frac12\mathbf{t}^\top\sigma^2\mathbf{I}\mathbf{t}}=e^{\mathbf{t}^\top\boldsymbol\mu+\frac{\sigma^2\|\mathbf{t}\|^2}{2}},\notag\\
        &\E\left[e^{\mathbf{t}^\top\mathbf{X}}\bigm|Y=-1,K=1\right]=e^{-\mathbf{t}^\top\boldsymbol\mu+\frac{\sigma^2\|\mathbf{t}\|^2}{2}},\notag\\
        &\E\left[e^{\mathbf{t}^\top\mathbf{X}}\bigm|Y=-1,K=2\right]=e^{3\mathbf{t}^\top\boldsymbol\mu+\frac{\sigma^2\|\mathbf{t}\|^2}{2}}.\label{eq:mgfs}
    \end{align}
    Taking into account \eqref{eq:prior}, \eqref{eq:neg_c1}, and \eqref{eq:neg_c2}, from \eqref{eq:mgf_cond} and \eqref{eq:mgfs}, we have
    \begin{equation}
        M_{\mathbf{X}}(\mathbf{t})=e^{\frac{\sigma^2\|\mathbf{t}\|^2}{2}}\left(\frac12\cdot e^{\mathbf{t}^\top\boldsymbol\mu}+\frac{p}2\cdot e^{-\mathbf{t}^\top\boldsymbol\mu}+\frac{1-p}{2}\cdot e^{3\mathbf{t}^\top\boldsymbol\mu}\right).
    \end{equation}
    Now, for a sample $\{(\mathbf{X}_i,Y_i)\}_{i=1}^n\,\,{\stackrel{\text{iid}}{\sim}}\,\,\mathcal{D}$ and $\hat{\boldsymbol\mu}$ defined by \eqref{eq:mu_hat}, we have
    \begin{align}
        &M_{\hat{\boldsymbol\mu}-\boldsymbol\mu}(\mathbf{t})=\E\left[\mathbf{t}^\top(\hat{\boldsymbol\mu}-\boldsymbol\mu)\right]=\E\left[e^{\mathbf{t}^\top\left(\frac{1}{2(1-p)n}\sum_{i=1}^n\mathbf{X}_i-\boldsymbol\mu\right)}\right]=\E\left[e^{\sum_{i=1}^n\frac{\mathbf{t}^\top}{2(1-p)n}\mathbf{X}_i}\right]e^{-\mathbf{t}^\top\boldsymbol\mu}\notag\\
        &=\left(\E\left[e^{\frac{\mathbf{t}^\top}{2(1-p)n}\mathbf{X}}\right]\right)^n e^{-\mathbf{t}^\top\boldsymbol\mu}=\left[M_\mathbf{X}\left(\frac{\mathbf{t}}{2(1-p)n}\right)\right]^n e^{-\mathbf{t}^\top\boldsymbol\mu}\notag\\
        &=\left[\exp\left({\frac{\sigma^2\|\mathbf{t}\|^2}{8(1-p)^2 n^2}}\right)\left(\frac12\exp\left(\frac{\mathbf{t}^\top\boldsymbol\mu}{2(1-p)n}\right)+\frac{p}2\exp\left(-\frac{\mathbf{t}^\top\boldsymbol\mu}{2(1-p)n}\right)+\frac{1-p}2\exp\left(\frac{3\mathbf{t}^\top\boldsymbol\mu}{2(1-p)n}\right)\right)\right]^n \exp\left(-\mathbf{t}^\top\boldsymbol\mu\right)\notag\\
        &{\stackrel{(\ast)}{=}}\exp\left(\frac{\sigma^2\|\mathbf{t}\|^2}{8(1-p)^2 n}\right)\left[\frac12+\frac12\cdot\frac{\mathbf{t}^\top\boldsymbol\mu}{2(1-p)n}+\frac{p}2-\frac{p}2\cdot\frac{\mathbf{t}^\top\boldsymbol{\mu}}{2(1-p)n}+\frac{1-p}2+\frac{1-p}2\cdot\frac{3\mathbf{t}^\top\boldsymbol\mu}{2(1-p)n}+O\left(\frac{(\mathbf{t}^\top\boldsymbol\mu)^2}{(1-p)^2n^2}\right)\right]^n \exp\left(-\mathbf{t}^\top\boldsymbol\mu\right)\notag\\
        &\le\exp\left(\frac{\sigma^2\|\mathbf{t}\|^2}{8(1-p)^2 n}\right)\underbrace{\left[1+\frac1{n}\left({\mathbf{t}^\top\boldsymbol\mu}+O\left(\frac{\|\mathbf{t}\|^2\|\boldsymbol\mu\|^2}{(1-p)^2n}\right)\right)\right]^n}_{\le\exp\left(\mathbf{t}^\top\boldsymbol\mu+O\left(\frac{\|\mathbf{t}\|^2\|\boldsymbol\mu\|^2}{(1-p)^2n}\right)\right)}\exp\left(-\mathbf{t}^\top\boldsymbol\mu\right)\le\exp\left(O\left(\frac{\|\mathbf{t}\|^2(\sigma^2+\|\boldsymbol\mu\|^2)}{(1-p)^2 n}\right)\right),\label{eq:mgf_bound}
    \end{align}
    where in $(\ast)$ we use the Taylor expansion $e^x=1+x+O(x^2)$ which is valid for small enough $x$ (i.e. large enough $n$). Since $\sigma$, $\boldsymbol{\mu}$, and $p$ are constants, \eqref{eq:mgf_bound} implies that $\hat{\boldsymbol\mu}-\boldsymbol\mu$ is a subGaussian random vector with variance proxy $\Theta\left(\frac{1}{n}\right)$, which in turn implies that it is norm-subGaussian with variance proxy $\Theta\left(\frac{d}{n}\right)$ \cite[Lemma 1]{DBLP:journals/corr/abs-1902-03736}. Thus, \eqref{eq:mom_conc} follows from the definition of a norm-subGaussian random vector \cite[Definition 3]{DBLP:journals/corr/abs-1902-03736}.
\end{proof}

\subsection{Proof of Lemma~\ref{lem:lda_err}}\label{sec:lda_err_proof}

Since LDA fits only one Gaussian to the negative class, we first find its center:
\begin{align}
    \E[\mathbf{X}\mid Y=-1]&=\E[\mathbf{X}\mid Y=-1,K=1]\cdot p+\E[\mathbf{X}\mid Y=-1,K=2]\cdot(1-p)\notag\\
    &=-\boldsymbol\mu p + 3\boldsymbol\mu(1-p)=-(4p-3)\boldsymbol\mu=:\boldsymbol\mu_-.\label{eq:mean_neg}
\end{align}
Further, LDA assumes that the Gaussians corresponding to the negative and positive classes have the same (diagonal) covariance matrix. Let us denote it by $\sigma^2_\text{LDA}\mathbf{I}$. The precise value of $\sigma^2_\text{LDA}$ is not important for our analysis as it is canceled anyway, so we will not compute it here. As was mentioned in Section~\ref{sec:classifiers}, the LDA classifier that has access to the true values of the parameters can be written as
\begin{equation}
h^\text{LDA}(\mathbf{x})=\begin{cases}
+1\quad&\text{if }f(\mathbf{x};\boldsymbol\mu,\sigma^2_{\text{LDA}}\mathbf{I})\ge f(\mathbf{x};\boldsymbol\mu_-,\sigma^2_{\text{LDA}}\mathbf{I})\\
-1&\text{otherwise}\label{eq:lda_clf2}
\end{cases},
\end{equation}
where
\begin{equation}
f(\mathbf{x};\boldsymbol\mu,\boldsymbol\Sigma):=\frac{1}{\sqrt{|\boldsymbol{\Sigma}|(2\pi)^{n}}}\exp\left(-\frac12(\mathbf{x}-\boldsymbol\mu)^\top\boldsymbol{\Sigma}^{-1}(\mathbf{x}-\boldsymbol{\mu})\right).\label{eq:gauss_pdf}
\end{equation}
We first find the misclassification error of the ``ideal'' LDA given by \eqref{eq:lda_clf2}, and then we discuss what happens when the parameter $\boldsymbol\mu$ is unknown and is estimated from data. We have
\begin{align}
\e_\mathcal{D}[h^\text{LDA}]:=\Pr_{\mathbf{X},Y\sim\mathcal{D}}[h^\text{LDA}(\mathbf{X})\ne Y]&=\Pr[h^\text{LDA}(\mathbf{X})\ne Y\mid Y=+1]\cdot\Pr[Y=+1]\notag\\
&+\Pr[h^\text{LDA}(\mathbf{X})\ne Y\mid Y=-1,K=1]\cdot\Pr[Y=-1,K=1]\notag\\
&+\Pr[h^\text{LDA}(\mathbf{X})\ne Y\mid Y=-1,K=2]\cdot\Pr[Y=-1,K=2].\label{eq:lda_err_decomp}
\end{align}
We compute the three conditional probabilities separately. For the positive class we have
\begin{align}
    \Pr[h^\text{LDA}(\mathbf{X})\ne Y\mid Y=+1]&=\Pr[f(\mathbf{X};\boldsymbol\mu,\sigma^2_{\text{LDA}}\mathbf{I})<f(\mathbf{X};\boldsymbol\mu_-,\sigma^2_{\text{LDA}}\mathbf{I})\mid Y=+1]\notag\\
    &=\Pr[-\|\mathbf{X}-\boldsymbol\mu\|^2<-\|\mathbf{X}-\boldsymbol\mu_-\|^2\mid Y=+1]\notag\\
    &=\Pr[(\boldsymbol\mu-\boldsymbol\mu_-)^\top(2\mathbf{X}-\boldsymbol\mu_--\boldsymbol\mu)<0\mid Y=+1]\notag\\
    &=\{\text{representing }\mathbf{X}=\boldsymbol\mu+\sigma\mathbf{I}^{1/2}\mathbf{Z}\}=\Pr_{\mathbf{Z}\sim\mathcal{N}(\mathbf{0},\mathbf{I})}\left[(\boldsymbol\mu-\boldsymbol\mu_-)^\top(\boldsymbol\mu-\boldsymbol\mu_-+2\sigma\mathbf{Z})<0\right]\notag\\
    &=\Pr_{\mathbf{Z}\sim\mathcal{N}(\mathbf{0},\mathbf{I})}\left[\|\boldsymbol\mu-\boldsymbol\mu_{-}\|^2+2\sigma(\boldsymbol\mu-\boldsymbol\mu_-)^\top\mathbf{Z}<0\right]\notag\\
    &=\Bigl\{\text{since }(\boldsymbol\mu-\boldsymbol\mu_-)^\top\mathbf{Z}\sim\mathcal{N}\Bigl(0,\|\boldsymbol\mu-\boldsymbol\mu_-\|^2\Bigr)\Bigr\}=\Pr_{Z\sim\mathcal{N}(0,1)}\left[\|\boldsymbol\mu-\boldsymbol\mu_-\|^2+2\sigma\|\boldsymbol\mu-\boldsymbol\mu_-\|Z<0\right]\notag\\
    &=\Pr_{Z\sim\mathcal{N}(0,1)}\left[Z<-\frac{\|\boldsymbol\mu-\boldsymbol\mu_-\|}{2\sigma}\right].\label{eq:lda_err_pos}
\end{align}
From \eqref{eq:mean_neg} and \eqref{eq:lda_err_pos}, we get
\begin{equation}
    \Pr[h^\text{LDA}(\mathbf{X})\ne Y\mid Y=+1]=\Pr_{Z\sim\mathcal{N}(0,1)}\left[Z<-(2p-1)\frac{\|\boldsymbol\mu\|}{\sigma}\right]=\Phi\left(-(2p-1)\frac{\|\boldsymbol\mu\|}{\sigma}\right).\label{eq:lda_err_one}
\end{equation}
Similarly, we can show that
\begin{align}
    \Pr[h^\text{LDA}(\mathbf{X})\ne Y\mid Y=-1,K=1]&=\Phi\left(-(3-2p)\frac{\|\boldsymbol\mu\|}{\sigma}\right),\label{eq:lda_err_two}\\
    \Pr[h^\text{LDA}(\mathbf{X})\ne Y\mid Y=-1,K=2]&=\Phi\left((2p+1)\frac{\|\boldsymbol\mu\|}{\sigma}\right).\label{eq:lda_err_three}
\end{align}
From \eqref{eq:prior}, \eqref{eq:neg_c1}, and \eqref{eq:neg_c2}, it follows that
\begin{align}
    &\Pr[Y=-1, K=1]=\Pr[K=1\mid Y=-1]\cdot\Pr[Y=-1]=\frac{p}{2},\notag\\
    &\Pr[Y=-1, K=2]=\Pr[K=2\mid Y=-1]\cdot\Pr[Y=-1]=\frac{1-p}{2}.\label{eq:neg_priors}
\end{align}
Now, combining \eqref{eq:lda_err_decomp}, \eqref{eq:lda_err_one}, \eqref{eq:lda_err_two}, \eqref{eq:lda_err_three}, \eqref{eq:prior}, and \eqref{eq:neg_priors} we have
\begin{equation}
\e_\mathcal{D}[h^\text{LDA}]=\frac12\left[\Phi\left(-(2p-1)\frac{\|\boldsymbol\mu\|}{\sigma}\right)\right.\\
    \left.+p\Phi\left(-(3-2p)\frac{\|\boldsymbol\mu\|}{\sigma}\right)+(1-p)\Phi\left((2p+1)\frac{\|\boldsymbol\mu\|}{\sigma}\right)\right].\label{eq:lda_err_clean}
\end{equation}
Arguing as above, for the classifier $h_S^\text{LDA}$ that does not have access to the true $\boldsymbol\mu$, and instead estimates it from the sample $S\sim\mathcal{D}^n$, we can get
\begin{equation}
    \e_\mathcal{D}[h_S^\text{LDA}]=\frac12\left[\Phi\left(-(2p-1)\frac{\|\hat{\boldsymbol\mu}\|}{\sigma}\right)\right.\\
    \left.+p\Phi\left(-(3-2p)\frac{\|\hat{\boldsymbol\mu}\|}{\sigma}\right)+(1-p)\Phi\left((2p+1)\frac{\|\hat{\boldsymbol\mu}\|}{\sigma}\right)\right].\label{eq:lda_err_est}
\end{equation}
To demonstrate the effect of replacing $\boldsymbol\mu$ by $\hat{\boldsymbol\mu}$ in \eqref{eq:lda_err_clean}, consider the absolute difference between the last c.d.f.'s in \eqref{eq:lda_err_clean} and \eqref{eq:lda_err_est}, which we can rewrite as
\begin{multline}
    \left|\Phi\left((2p+1)\frac{\|\hat{\boldsymbol\mu}\|}{\sigma}\right)-\Phi\left((2p+1)\frac{\|\boldsymbol\mu\|}{\sigma}\right)\right|\\
    =\underbrace{\left|\Phi\left((2p+1)\frac{\|\hat{\boldsymbol\mu}\|}{\sigma}\right)-\Phi\left((2p+1)\frac{\|\boldsymbol\mu\|}{\sigma}\right)\right|}_{\le1}\mathbb{I}[\|\hat{\boldsymbol\mu}-\boldsymbol\mu\|> t]+\left|\Phi\left((2p+1)\frac{\|\hat{\boldsymbol\mu}\|}{\sigma}\right)-\Phi\left((2p+1)\frac{\|\boldsymbol\mu\|}{\sigma}\right)\right|\mathbb{I}[\|\hat{\boldsymbol\mu}-\boldsymbol\mu\|\le t].\label{eq:lda_3rd_diff}
\end{multline}
By the mean value theorem, we have
\begin{equation}
    \Phi\left((2p+1)\frac{\|\hat{\boldsymbol\mu}\|}{\sigma}\right)-\Phi\left((2p+1)\frac{\|\boldsymbol\mu\|}{\sigma}\right)=\phi\left((2p+1)\frac{\|\boldsymbol\xi\|}{\sigma}\right)\frac{(2p+1)\boldsymbol\xi^\top}{\sigma\|\boldsymbol\xi\|}(\hat{\boldsymbol\mu}-\boldsymbol\mu),\label{eq:lda_mvt}
\end{equation}
where $\boldsymbol\xi$ is a point on the segment connecting $\boldsymbol\mu$ and $\hat{\boldsymbol\mu}$. Since $\boldsymbol\xi/\|\boldsymbol\xi\|$ is a unit vector, from  \eqref{eq:lda_3rd_diff} and \eqref{eq:lda_mvt} we get
\begin{equation}
    \E_{S\sim\mathcal{D}^n}\left|\Phi\left((2p+1)\frac{\|\hat{\boldsymbol\mu}\|}{\sigma}\right)-\Phi\left((2p+1)\frac{\|\boldsymbol\mu\|}{\sigma}\right)\right|\le\Pr_{S\sim\mathcal{D}^n}[\|\hat{\boldsymbol\mu}-\boldsymbol\mu\|>t]+\frac{2p+1}{\sqrt{2\pi}\sigma}\cdot t.\label{eq:mean_abs_diff}
\end{equation}
From Lemma~\ref{lem:mom_conc}, we have
\begin{equation}
    \Pr_{S\sim\mathcal{D}^n}[\|\hat{\boldsymbol\mu}-\boldsymbol\mu\|>t]\le e^{-\Omega\left(\frac{n t^2}{d}\right)}.\label{eq:conc_ineq}
\end{equation}
Now taking $t=\sqrt{\frac{d\ln n}n}$, from \eqref{eq:mean_abs_diff} and \eqref{eq:conc_ineq}, we obtain
\begin{equation}
    \E_{S\sim\mathcal{D}^n}\left|\Phi\left((2p+1)\frac{\|\hat{\boldsymbol\mu}\|}{\sigma}\right)-\Phi\left((2p+1)\frac{\|\boldsymbol\mu\|}{\sigma}\right)\right|=\widetilde{O}\left(\sqrt{\frac{d}{n}}\right).\label{eq:diff1}
\end{equation}
In a similar fashion, we can derive bounds for the other two differences:
\begin{align}
    \E_{S\sim\mathcal{D}^n}\left|\Phi\left(-(2p-1)\frac{\|\hat{\boldsymbol\mu}\|}{\sigma}\right)-\Phi\left(-(2p-1)\frac{\|{\boldsymbol\mu}\|}{\sigma}\right)\right|&=\widetilde{O}\left(\sqrt{\frac{d}{n}}\right),\label{eq:diff2}\\
    \E_{S\sim\mathcal{D}^n}\left|\Phi\left(-(3-2p)\frac{\|\hat{\boldsymbol\mu}\|}{\sigma}\right)-\Phi\left(-(3-2p)\frac{\|{\boldsymbol\mu}\|}{\sigma}\right)\right|&=\widetilde{O}\left(\sqrt{\frac{d}{n}}\right).\label{eq:diff3}
\end{align}
Finally, from \eqref{eq:lda_err_clean}, \eqref{eq:lda_err_est}, \eqref{eq:diff1}, \eqref{eq:diff2}, and \eqref{eq:diff3}, we have
\begin{align*}
    \E_{S\sim\mathcal{D}^n}\left[\e_{\mathcal{D}}[h_S^\text{LDA}]\right]&=\E_{S\sim\mathcal{D}^n}\left[\e_\mathcal{D}[h^\text{LDA}]\right]+\E_{S\sim\mathcal{D}^n}\left[\e_{\mathcal{D}}[h_S^\text{LDA}]-\e_\mathcal{D}[h^\text{LDA}]\right]\\
    &=\frac12\left[\Phi\left(-(2p-1)\frac{\|\boldsymbol\mu\|}{\sigma}\right)
    +p\Phi\left(-(3-2p)\frac{\|\boldsymbol\mu\|}{\sigma}\right)+(1-p)\Phi\left((2p+1)\frac{\|\boldsymbol\mu\|}{\sigma}\right)\right]+\widetilde{O}\left(\sqrt{\frac{d}{n}}\right),
\end{align*}
which concludes the proof.

\subsection{Proof of Lemma~\ref{lem:mda_err}}\label{sec:mda_err_proof}

Recall that the MDA classifier that has access to the true values of the parameters in data generation model $\mathcal{D}$ (defined by \eqref{eq:prior}--\eqref{eq:neg_c2}) can be written as
$$
h^\text{MDA}(\mathbf{x})=\begin{cases}
+1\quad&\text{if}\quad\frac12f(\mathbf{x};\boldsymbol\mu,\sigma^2\mathbf{I})\ge\frac{p}{2}f(\mathbf{x};-\boldsymbol\mu,\sigma^2\mathbf{I})\quad\text{and}\quad\frac12f(\mathbf{x};\boldsymbol\mu,\sigma^2\mathbf{I})\ge\frac{1-p}{2}f(\mathbf{x};3\boldsymbol\mu,\sigma^2\mathbf{I}),\\
-1 &\text{otherwise}.
\end{cases}
$$
where $f(\mathbf{x};\boldsymbol\mu,\boldsymbol\Sigma)$ is given by \eqref{eq:gauss_pdf}. Its error is decomposed as in the case of LDA, i.e.
\begin{align}
\e_\mathcal{D}[h^\text{MDA}]=\Pr_{\mathbf{X},Y\sim\mathcal{D}}[h^\text{MDA}(\mathbf{X})\ne Y]&=\Pr[h^\text{MDA}(\mathbf{X})\ne Y\mid Y=+1]\cdot\Pr[Y=+1]\notag\\
&+\Pr[h^\text{MDA}(\mathbf{X})\ne Y\mid Y=-1,K=1]\cdot\Pr[Y=-1,K=1]\notag\\
&+\Pr[h^\text{MDA}(\mathbf{X})\ne Y\mid Y=-1,K=2]\cdot\Pr[Y=-1,K=2].\label{eq:mda_err_decomp}
\end{align}
We upper-bound the three conditional probabilities separately. For the positive class, we have
\begin{align}
&\Pr[h^\text{MDA}(\mathbf{X})\ne Y\mid Y=+1]=\Pr[h^\text{MDA}(\mathbf{X})\ne+1\mid Y=+1]\notag\\
&=\Pr\left[\left\{\frac12f(\mathbf{X};\boldsymbol\mu,\sigma^2\mathbf{I})<\frac{p}{2}f(\mathbf{X};-\boldsymbol\mu,\sigma^2\mathbf{I})\right\}\,\bigcup\,\left\{\frac12f(\mathbf{X};\boldsymbol\mu,\sigma^2\mathbf{I})<\frac{1-p}{2}f(\mathbf{X};3\boldsymbol\mu,\sigma^2\mathbf{I})\right\}\bigm| Y=+1\right]\notag\\
&\le\Pr\left[-\frac{\|\mathbf{X}-\boldsymbol\mu\|^2}{2\sigma^2}<\ln p-\frac{\|\mathbf{X}+\boldsymbol\mu\|^2}{2\sigma^2}\bigm| Y=+1\right]+\Pr\left[-\frac{\|\mathbf{X}-\boldsymbol\mu\|^2}{2\sigma^2}<\ln(1-p)-\frac{\|\mathbf{X}-3\boldsymbol\mu\|^2}{2\sigma^2}\bigm| Y=+1\right]\notag\\
&=\Pr\left[\frac{1}{2\sigma^2}(2\mathbf{X})^\top(2\boldsymbol\mu)<\ln p\bigm| Y=+1\right]+\Pr\left[\frac{1}{2\sigma^2}(2\mathbf{X}-4\boldsymbol\mu)^\top(-2\boldsymbol\mu)<\ln(1-p)\bigm| Y=+1\right]\notag\\
&=\Pr\left[\mathbf{X}^\top\boldsymbol\mu<\frac{\sigma^2\ln p}{2}\bigm| Y=+1\right]+\Pr\left[(\mathbf{X}-2\boldsymbol\mu)^\top\boldsymbol\mu>-\frac{\sigma^2\ln(1-p)}{2}\bigm| Y=+1\right]\notag\\
&=\Pr_{\mathbf{Z}\sim\mathcal{N}(\mathbf{0},\mathbf{I})}\left[(\boldsymbol\mu+\sigma\mathbf{Z})^\top\boldsymbol\mu<\frac{\sigma^2\ln p}{2}\right]+\Pr_{\mathbf{Z}\sim\mathcal{N}(\mathbf{0},\mathbf{I})}\left[(\sigma\mathbf{Z}-\boldsymbol\mu)^\top\boldsymbol\mu>-\frac{\sigma^2\ln(1-p)}{2}\right]\notag\\
&=\Pr_{\mathbf{Z}\sim\mathcal{N}(\mathbf{0},\mathbf{I})}\left[\sigma\mathbf{Z}^\top\boldsymbol\mu<\frac{\sigma^2\ln p}{2}-\|\boldsymbol\mu\|^2\right]+\Pr_{\mathbf{Z}\sim\mathcal{N}(\mathbf{0},\mathbf{I})}\left[\sigma\mathbf{Z}^\top\boldsymbol\mu>\|\boldsymbol\mu\|^2-\frac{\sigma^2\ln(1-p)}{2}\right]\notag\\
&=\Pr_{Z\sim\mathcal{N}(0,1)}\left[Z<\frac{\sigma\ln p}{2\|\boldsymbol\mu\|}-\frac{\|\boldsymbol\mu\|}{\sigma}\right]+\Pr_{Z\sim\mathcal{N}(0,1)}\left[Z>\frac{\|\boldsymbol\mu\|}{\sigma}-\frac{\sigma\ln(1-p)}{2\|\boldsymbol\mu\|}\right]\notag\\
&=\Phi\left(-\frac{\|\boldsymbol\mu\|}\sigma+\frac{\sigma\ln p}{2\|\boldsymbol\mu\|}\right)+\Phi\left(-\frac{\|\boldsymbol\mu\|}{\sigma}+\frac{\sigma\ln(1-p)}{2\|\boldsymbol\mu\|}\right).\label{eq:mda_err_pos}
\end{align}
For the majority subpopulation of the negative class,
\begin{align}
&\Pr[h^\text{MDA}(\mathbf{X})\ne Y\mid Y=-1,K=1]=\Pr[h^\text{MDA}(\mathbf{X})\ne-1\mid Y=-1,K=1]\notag\\
&=\Pr\left[\left\{\frac12f(\mathbf{X};\boldsymbol\mu,\sigma^2\mathbf{I})\ge\frac{p}{2}f(\mathbf{X};-\boldsymbol\mu,\sigma^2\mathbf{I})\right\}\,\bigcap\,\left\{\frac12f(\mathbf{X};\boldsymbol\mu,\sigma^2\mathbf{I})\ge\frac{1-p}{2}f(\mathbf{X};3\boldsymbol\mu,\sigma^2\mathbf{I})\right\}\bigm| Y=-1,K=1\right]\notag\\
&\le\Pr\left[\mathbf{X}^\top\boldsymbol\mu\ge\frac{\sigma^2\ln p}{2}\bigm| Y=-1,K=1\right]=\Pr_{\mathbf{Z}\sim\mathcal{N}(\mathbf{0},\mathbf{I})}\left[\sigma\mathbf{Z}^\top\boldsymbol\mu\ge\|\boldsymbol\mu\|^2+\frac{\sigma^2\ln p}{2}\bigm| Y=-1,K=1\right]\notag\\
&=\Pr_{Z\sim\mathcal{N}(0,1)}\left[Z\ge\frac{\|\boldsymbol\mu\|}{\sigma}+\frac{\sigma\ln p}{2\|\boldsymbol\mu\|}\right]=\Phi\left(-\frac{\|\boldsymbol\mu\|}{\sigma}-\frac{\sigma\ln p}{2\|\boldsymbol\mu\|}\right).\label{eq:mda_err_neg1}
\end{align}
And for the minority subpopulation of the negative class,
\begin{align}
&\Pr[h^\text{MDA}(\mathbf{X})\ne Y\mid Y=-1,K=2]=\Pr[h^\text{MDA}(\mathbf{X})\ne-1\mid Y=-1,K=2]\notag\\
&=\Pr\left[\left\{\frac12f(\mathbf{X};\boldsymbol\mu,\sigma^2\mathbf{I})\ge\frac{p}{2}\phi(\mathbf{X};-\boldsymbol\mu,\sigma^2\mathbf{I})\right\}\,\bigcap\,\left\{\frac12\phi(\mathbf{X};\boldsymbol\mu,\sigma^2\mathbf{I})\ge\frac{1-p}{2}\phi(\mathbf{X};3\boldsymbol\mu,\sigma^2\mathbf{I})\right\}\bigm| Y=-1,K=2\right]\notag\\
&\le\Pr\left[(2\mathbf{X}-4\boldsymbol\mu)^\top\boldsymbol\mu\le-\sigma^2\ln(1-p)\bigm| Y=-1,K=2\right]=\Pr_{\mathbf{Z}\sim\mathcal{N}(\mathbf{0},\mathbf{I})}\left[(2\sigma\mathbf{Z}+2\boldsymbol\mu)^\top\boldsymbol\mu\le-\sigma^2\ln(1-p)\right]\notag\\
&=\Pr_{\mathbf{Z}\sim\mathcal{N}(\mathbf{0},\mathbf{I})}\left[2\sigma\mathbf{Z}^\top\boldsymbol\mu\le-\|\boldsymbol\mu\|^2-\sigma^2\ln(1-p)\right]=\Pr_{Z\sim\mathcal{N}(0,1)}\left[Z\le-\frac{\|\boldsymbol\mu\|}{\sigma}-\frac{\sigma\ln(1-p)}{\|\boldsymbol\mu\|}\right]=\Phi\left(-\frac{\|\boldsymbol\mu\|}{\sigma}-\frac{\sigma\ln(1-p)}{\|\boldsymbol\mu\|}\right).\label{eq:mda_err_neg2}
\end{align}
Now \eqref{eq:mda_err_decomp}, \eqref{eq:mda_err_pos}, \eqref{eq:mda_err_neg1}, and \eqref{eq:mda_err_neg2} imply the error bound
\begin{multline}
\e_{\mathcal{D}}[h^\text{MDA}]\\
\le\frac12\left[\Phi\left(-\frac{\|\boldsymbol\mu\|}\sigma+\frac{\sigma\ln p}{2\|\boldsymbol\mu\|}\right)+\Phi\left(-\frac{\|\boldsymbol\mu\|}{\sigma}+\frac{\sigma\ln(1-p)}{2\|\boldsymbol\mu\|}\right)+p\cdot \Phi\left(-\frac{\|\boldsymbol\mu\|}{\sigma}-\frac{\sigma\ln p}{2\|\boldsymbol\mu\|}\right)+(1-p)\cdot \Phi\left(-\frac{\|\boldsymbol\mu\|}\sigma-\frac{\sigma\ln(1-p)}{2\|\boldsymbol\mu\|}\right)\right].\label{eq:mda_err_clean}    
\end{multline}
As in the proof of Lemma~\ref{lem:lda_err}, estimating $\boldsymbol\mu$ by the method of moments adds the term $\widetilde{O}\left(\sqrt{d/n}\right)$, hence the expected error of the MDA classifier is
\begin{multline*}
\E_{S\sim\mathcal{D}^n}\left[\e_{\mathcal{D}}[h^\text{MDA}_S]\right]\\
\le\frac12\left[\Phi\left(-\frac{\|\boldsymbol\mu\|}\sigma+\frac{\sigma\ln p}{2\|\boldsymbol\mu\|}\right)+\Phi\left(-\frac{\|\boldsymbol\mu\|}{\sigma}+\frac{\sigma\ln(1-p)}{2\|\boldsymbol\mu\|}\right)+p\cdot \Phi\left(-\frac{\|\boldsymbol\mu\|}{\sigma}-\frac{\sigma\ln p}{2\|\boldsymbol\mu\|}\right)+(1-p)\cdot \Phi\left(-\frac{\|\boldsymbol\mu\|}\sigma-\frac{\sigma\ln(1-p)}{2\|\boldsymbol\mu\|}\right)\right]\\
+\widetilde{O}\left(\sqrt{\frac{d}{n}}\right).
\end{multline*}

\subsection{Proof of Theorem~\ref{thm:main}}\label{sec:main_proof}
For convenience, we denote $\nu=\frac{\|\boldsymbol\mu\|}\sigma$ and $p=1-\frac1t$, and rewrite the error bounds \eqref{eq:lda_err_clean} and \eqref{eq:mda_err_clean} in terms of $\nu$ and $t$:
\begin{align*}
    2\e_\mathcal{D}[h^\text{LDA}]
    &=\Phi\Bigl(-(2p-1)\nu\Bigr)+p\Phi\Bigl(-(3-2p)\nu\Bigr)+(1-p)\Phi\Bigl((2p+1)\nu\Bigr)\\
    &=\underbrace{\Phi\left(-\left[1-\frac2t\right]\nu\right)}_{\circled{1}}+\underbrace{\left(1-\frac1t\right)\Phi\left(-\left[1+\frac2t\right]\nu\right)}_{\circled{2}}+\underbrace{\frac1t\Phi\left(\left[3-\frac2t\right]\nu\right)}_{\circled{3}},\\
    2\e_\mathcal{D}[h^\text{MDA}]
    &=\Phi\left(-\nu+\frac{\ln p}{2\nu}\right)+\Phi\left(-\nu+\frac{\ln(1-p)}{2\nu}\right)+p\Phi\left(-\nu-\frac{\ln p}{2\nu}\right)+(1-p)\Phi\left(-\nu-\frac{\ln(1-p)}{2\nu}\right)\\
    &=\underbrace{\Phi\left(-\nu+\frac{\ln(1-1/t)}{2\nu}\right)}_{\circled{4}}+\underbrace{\Phi\left(-\nu-\frac{\ln t}{2\nu}\right)}_{\circled{5}}+\underbrace{\left(1-\frac1t\right)\Phi\left(-\nu-\frac{\ln(1-1/t)}{2\nu}\right)}_{\circled{6}}+\underbrace{\frac1t\Phi\left(-\nu+\frac{\ln t}{2\nu}\right)}_{\circled{7}}.
\end{align*}
Then
$$
    \circled{1}-\circled{4}=\Phi\left(-\nu+\frac{2\nu}t\right)-\Phi\left(-\nu+\frac{\ln(1-1/t)}{2\nu}\right)=\phi(\xi_1)\left(\frac{2\nu}{t}-\frac{\ln(1-1/t)}{2\nu}\right),
$$
where $\xi_1\in\left[-\nu+\frac{\ln(1-1/t)}{2\nu},-\nu+\frac{2\nu}{t}\right]$, and therefore 
\begin{align*}
\phi(\xi_1)\ge\phi\left(-\nu+\frac{\ln(1-1/t)}{2\nu}\right)&=\frac{1}{\sqrt{2\pi}}\exp\left(-\frac12\left[\nu^2-\ln(1-1/t)+\frac{\ln^2(1-1/t)}{4\nu^2}\right]\right)\\
&=\frac{\sqrt{1-1/t}}{\sqrt{2\pi}}\exp\left(-\frac12\left[\nu^2+\frac{\ln^2(1-1/t)}{4\nu^2}\right]\right).
\end{align*}
Thus we have
$$
\circled{1}-\circled{4}\ge\frac{\sqrt{1-1/t}}{\sqrt{2\pi}}\exp\left(-\frac12\left[\nu^2+\frac{\ln^2(1-1/t)}{4\nu^2}\right]\right)\left(\frac{2\nu}{t}-\frac{\ln(1-1/t)}{2\nu}\right).
$$

Next,
\begin{align*}
    \circled{2}-\circled{6}&=\left(1-\frac1t\right)\left[\Phi\left(-\nu-\frac{2\nu}{t}\right)-\Phi\left(-\nu-\frac{\ln(1-1/t)}{2\nu}\right)\right]\\
    &=\left(1-\frac1t\right)\phi(\xi_2)\left(-\frac{2\nu}{t}+\frac{\ln(1-1/t)}{2\nu}\right).
\end{align*}
where $\xi_2\in\left[-\nu-\frac{2\nu}{t},-\nu-\frac{\ln(1-1/t)}{2\nu}\right]$, and therefore
\begin{align*}
    \phi(\xi_2)\le\phi\left(-\nu-\frac{\ln(1-1/t)}{2\nu}\right)&=\frac{1}{\sqrt{2\pi}}\exp\left(-\frac12\left[\nu^2+\ln(1-1/t)+\frac{\ln^2(1-1/t)}{4\nu^2}\right]\right)\\
&=\frac{1}{\sqrt{2\pi}\sqrt{1-1/t}}\exp\left(-\frac12\left[\nu^2+\frac{\ln^2(1-1/t)}{4\nu^2}\right]\right).
\end{align*}
Thus
$$
\circled{2}-\circled{6}\ge-\frac{\sqrt{1-1/t}}{\sqrt{2\pi}}\exp\left(-\frac12\left[\nu^2+\frac{\ln^2(1-1/t)}{4\nu^2}\right]\right)\left(\frac{2\nu}{t}-\frac{\ln(1-1/t)}{2\nu}\right).
$$
And, therefore
$$
\circled{1}-\circled{4}+\circled{2}-\circled{6}\ge0.
$$

For the term $\circled{5}$ we have
\begin{align*}
    \Phi(-\nu)-\circled{5}&=\Phi(-\nu)-\Phi\left(-\nu-\frac{\ln t}{2\nu}\right)=\phi(\xi_3)\left(\frac{\ln t}{2\nu}\right)\\
    &\ge\phi\left(-\nu-\frac{\ln t}{2\nu}\right)\frac{\ln t}{2\nu}=\frac{1}{\sqrt{2\pi}}\exp\left(-\frac12\left[\nu^2+\ln t+\frac{\ln^2 t}{4\nu^2}\right]\right)\frac{\ln t}{2\nu}\\
    &=\frac{1}{\sqrt{2\pi t}}\exp\left(-\frac12\left[\nu^2+\frac{\ln^2 t}{4\nu^2}\right]\right)\frac{\ln t}{2\nu},
\end{align*}
where $\xi_3\in\left[-\nu-\frac{\ln t}{2\nu},-\nu\right]$.
For the term $\circled{7}$ we have
\begin{align*}
    \frac1t\Phi(-\nu)-\circled{7}&=\frac1t\Phi(-\nu)-\frac1t\Phi\left(-\nu+\frac{\ln t}{2\nu}\right)=\frac1t\phi(\xi_4)\left(-\frac{\ln t}{2\nu}\right)\\
    &\ge\frac1t\phi\left(-\nu+\frac{\ln t}{2\nu}\right)\left(-\frac{\ln t}{2\nu}\right)=\frac1{t\sqrt{2\pi}}\exp\left(-\frac12\left[\nu^2-\ln t+
    \frac{\ln^2 t}{4\nu^2}\right]\right)\left(-\frac{\ln t}{2\nu}\right)\\
    &=-\frac{1}{\sqrt{2\pi t}}\exp\left(-\frac12\left[\nu^2+\frac{\ln^2 t}{4\nu^2}\right]\right)\frac{\ln t}{2\nu}.
\end{align*}
Thus
$$
-\circled{5}-\circled{7}=\Phi(-\nu)-\circled{5}+\frac1t\Phi(-\nu)-\circled{7}-\Phi(-\nu)-\frac1t\Phi(-\nu)\ge-\Phi(-\nu)-\frac1t\Phi(-\nu).
$$
Overall
\begin{align*}
2\e_\mathcal{D}[h^\text{LDA}]-2\e_\mathcal{D}[h^\text{MDA}]&\ge\left(\circled{1}-\circled{4}+\circled{2}-\circled{6}\right)+\left(\circled{3}-\circled{5}-\circled{7}\right)\\
&\ge\frac1t\Phi(\nu)-\frac1t\Phi(-\nu)-\Phi(-\nu)\\
&=\frac1t-\frac2t\Phi(-\nu)-\Phi(-\nu)=\frac1t-\left(\frac2t+1\right)\Phi(-\nu)\\
&=(1-p)-(3-2p)\Phi(-\nu) \ge(1-p)-2\Phi(-\nu),   
\end{align*}
where we used $3-2p\le2$ which is true for $p\ge\frac12$. Going back to $\boldsymbol\mu$ and $\sigma$, using the exponential bound $\Phi(-x)\le\frac12\exp(-{x^2}/{2})$ and estimating $\boldsymbol\mu$ with the method of moments gives
$$
\E_{S\sim\mathcal{D}^n}\left[\e_\mathcal{D}[h^\text{LDA}_S]-\e_\mathcal{D}[h^\text{MDA}_S]\right]\ge\frac{1-p}{2}-\exp\left(-\frac{\|\boldsymbol\mu\|^2}{2\sigma^2}\right)+\widetilde{O}\left(\sqrt{\frac{d}{n}}\right),
$$
which concludes the proof.

\subsection{Proof of Theorem~\ref{thm:train_test_diff}}\label{sec:train_test_diff_proof}
We denote the distribution given by formulas \eqref{eq:prior}--\eqref{eq:neg_c2} as $\mathcal{D}_p$. We are interested in the expected error of the classifier, which was trained on samples from $\mathcal{D}_q$, but is tested on a sample from $D_p$, i.e. $\E_{S\sim\mathcal{D}_q}\left[\e_{\mathcal{D}_p}[h_S]\right]$. Arguing in the same way as in Sections \ref{sec:lda_err_proof} and \ref{sec:mda_err_proof}, we can prove the following results for LDA and MDA classifiers.

\begin{align}
    &2\E_{S\sim\mathcal{D}^n_{1-1/t}}\left[\e_{\mathcal{D}_p}[h^\text{LDA}_S]\right]=\Phi\left(-\left[1-\frac2t\right]\nu\right)+p\Phi\left(-\left[1+\frac2t\right]\nu\right)+(1-p)\Phi\left(\left[3-\frac2t\right]\nu\right)\label{eq:lda_pq_bound_full},\\
    &2\E_{S\sim\mathcal{D}^n_{1-1/t}}\left[\e_{\mathcal{D}_p}[h^\text{MDA}_S]\right]=\Phi\left(-\nu+\frac{\ln(1-1/t)}{2\nu}\right)+\Phi\left(-\nu-\frac{\ln t}{2\nu}\right)+p\Phi\left(-\nu-\frac{\ln(1-1/t)}{2\nu}\right)+(1-p)\Phi\left(-\nu+\frac{\ln t}{2\nu}\right).\label{eq:mda_pq_bound_full}
\end{align}
The advertised bounds follow from local linear approximations of the terms involved in \eqref{eq:lda_pq_bound_full} and \eqref{eq:mda_pq_bound_full}. For example, for the first term in \eqref{eq:lda_pq_bound_full}, we have
$$
\Phi\left(-\nu+\frac{2\nu}{t}\right)=\Phi(-\nu)+\phi(\xi_t)\frac{2\nu}{t}=\Phi(-\nu)+\Theta\left(\frac1t\right),
$$
where $\xi_t\in\left[-\nu,-\nu+\frac{2\nu}{t}\right]$, and therefore for $t\ge3$, we have $\phi(-\nu)\le\phi(\xi_t)\le\phi(-\nu/3)$, which implies the bound above.

\end{document}